\documentclass{article}
\usepackage{log_2026}						
\usepackage{multirow}
\usepackage{bigdelim}
\usepackage{amsmath}
\usepackage{amssymb}
\usepackage{amsthm}
\usepackage{dsfont}
\usepackage{booktabs}						
\usepackage{multirow}						
\usepackage{amsfonts}						
\usepackage{graphicx}						
\usepackage{duckuments}						
\usepackage{tikz}
\usetikzlibrary{positioning}
\usetikzlibrary{arrows.meta}
\usetikzlibrary{decorations.pathreplacing}
\usepackage{subcaption}
\usepackage{enumitem}
\usepackage{soul}
\usepackage{xcolor}
\usepackage{colortbl}
\definecolor{grey}{rgb}{0.5,0.5,0.5}
\newcommand{\best}[1]{%
  \sethlcolor{red!30}%
  \hl{\textbf{#1}}%
}

\newcommand{\second}[1]{%
  \sethlcolor{grey!20}%
  \hl{#1}%
}
\newcommand{\rebuttal}[1]{\textcolor{black}{#1}}
\newcommand{\newcol}[1]{\textcolor{black}{#1}}

\usepackage[
     backend=biber,
     style=numeric-comp,
     backref=true,
     natbib=true,
     url=false,
     doi=false,
     eprint=false,
     isbn=false,
     giveninits=true,
     uniquename=init]{biblatex}
\AtEveryBibitem{\clearfield{issn}\clearfield{pages}\clearlist{publisher}\clearfield{month}}

\newtheorem{remark}{Remark}
\newtheorem{theorem}{Theorem}
\newtheorem{proposition}{Proposition}

\newtheorem{corollary}{Corollary}
\title[]{Repurposing Unified Topological Signatures for Graph Representation Learning}

\author[Sanyam et al.]{
Sanyam Sanjay Jain$^{a}$,
Anshika Krishnatray$^{a}$,
Aditya Sharma$^{a}$,
Vinti Agarwal$^{a}$\\[0.5em]
$^{a}$BITS Pilani, India \quad\\[0.5em]
}

\begin{document}

\maketitle

\begin{abstract}

Message-passing Graph Neural Networks (GNNs) iteratively propagate and aggregate local neighborhood information followed by global readout to learn graph representations. However, their discriminative power is upper-bounded by the Weisfeiler--Lehman (1-WL) graph isomorphism test. This prevents GNNs from distinguishing certain non-isomorphic graphs with identical local neighborhood structures, often leading to similar graph representations. Unified Topological Signatures (UTS) capture compact, multi-scale representation of global graph topology derived from persistent homology. \rebuttal{We introduce two complementary UTS signatures: \(\Phi_{\mathrm{grph}}\), a static signature of the input graph topology, and \(\Phi_{\mathrm{emb}}\), a dynamic signature of the evolving embedding topology.} They encode structural information inaccessible to 1-WL-based message-passing GNNs, yet their capabilities are explored solely for post-hoc embedding-space analysis. We integrate UTS into GNN training across three architectural interventions: (i) \textit{UTS-Aug}: augmenting with standard readout feature that encodes graph's true topology; (ii) \textit{UTS-Reg}: topological regularizer that constrains representation collapse; (iii) \textit{UTS-Pool}: topology-guided pooling that retains structurally critical nodes.  We further leverage UTS as a layer-wise diagnostic to quantify oversmoothing during GNN training.
Theoretically, we show that integrating UTS into GNN optimization strictly extends GNN expressivity beyond the 1-WL hierarchy. 
\rebuttal{Experiments on three graph classification benchmarks show consistent benefits: Graph-UTS, Dual-UTS, and UTS-Pool improve accuracy across all three datasets, Embedding-UTS provides smaller but similarly consistent gains, and UTS-Reg's benefit varies across graph domains. Accuracy improves by up to $5.8\%$ with Graph-UTS augmentation, by up to $1.9\%$ with UTS-Reg, and achieves comparable performance to TOGL with UTS-Pool.}

\end{abstract}

\section{Introduction}
\label{sec:intro}


Graph Neural Networks (GNNs) have become the default architecture for relational learning, propagating and aggregating information across neighborhoods in a permutation-invariant manner~\citep{hamilton2018inductiverepresentationlearninglarge}.
For graph-level tasks, this aggregation exposes a representational limitation at readout. Standard readout functions (sum, mean, max) \ul{retain only first-order statistics of node embeddings}, discarding higher-order connectivity almost entirely. Learned pooling operators perform adaptive graph coarsening~\citep{Grattarola_2024,
bianchi2020spectralclusteringgraphneural,
ying2019hierarchicalgraphrepresentationlearning},  
but select nodes based on local neighborhood, offering no guarantee that global topology survives pooling. 
Deeper message-passing layers enlarge the receptive field, but repeated aggregation acts as a low-pass filter on the feature matrix, \ul{collapsing node embeddings toward a common value} as the Dirichlet energy decays exponentially in depth---resulting in
\textit{oversmoothing}~\citep{li2018deeper, oono2020graph}. Oversmoothing collapses node embeddings and, consequently, graph representations after readout, eliminating structural information essential for graph-level prediction. Standard GNN objectives also lack an explicit mechanism to monitor topological fidelity during training.

The expressive power of message-passing GNNs is provably \ul{upper bounded by the $1$-dimensional Weisfeiler--Leman ($1$-WL) test}: any two graphs that are $1$-WL equivalent yield identical node embeddings under any message-passing GNN, regardless of depth or width~\citep{xu2019how}. As a result, non-isomorphic graphs with distinct global organization (\ e.g., G1 and G2 in Figure~\ref{fig:uts_comparison}), collapse to indistinguishable representations despite having different cycle structure and connectivity. 

Topological Data Analysis offers a natural way to expose what message passing hides. Persistent homology \citep{Edelsbrunner2002TopologicalPersistence}  extracts stable, multi-scale descriptors (components, cycles, higher-order voids), while recent work has incorporated topological priors into GNNs through persistence-based pooling~\citep{chen2023topologicalpoolinggraphs} and discrete curvature methods such as Ricci flow and ORC-POOL~\citep{ollivier2007riccicurvaturemarkovchains, feng2024graphpoolingricciflow}. These confirm topological signal benefits GNNs, but each draws on a single geometric perspective. 
\emph{Unified Topological Signatures} (UTS) compresses the global geometry of an embedding space into a compact descriptor combining persistent homology with geometric and spectral features~\citep{rottach2025topologyretrievaldecodingembedding}, but so far only as a post-hoc analysis tool after training.

We incorporate UTS into the GNN optimization process through two complementary signatures: a \ul{dynamic embedding signature} (\textit{Embedding-UTS}) computed from each GNN layer, and a \ul{static graph topology signature} (\textit{Graph-UTS}) extracted from the input graph. The former characterizes the evolving topology of the learned representation, while the latter captures the graph's intrinsic topological invariants. 
For example, Figure~\ref{fig:uts_comparison} illustrates that the 1-WL-equivalent graphs $G_1$ ($C_6$) and $G_2$ ($2\times C_3$) are indistinguishable to message-passing GNNs, whereas UTS captures complementary global topological information beyond neighborhood aggregation, yielding distinct graph representations.

We deploy both signatures across three architectural interventions: augmented readout, topology-preserving regularization, and topology-aware pooling, detailed in Section~\ref{sec:methodology}. The dynamic signature additionally provides a layer-wise \textit{Oversmoothing Index}, transforming representation collapse from an unobserved failure mode into a measurable training signal.

\begin{figure}[t]
    \centering
    \includegraphics[width=1\linewidth]{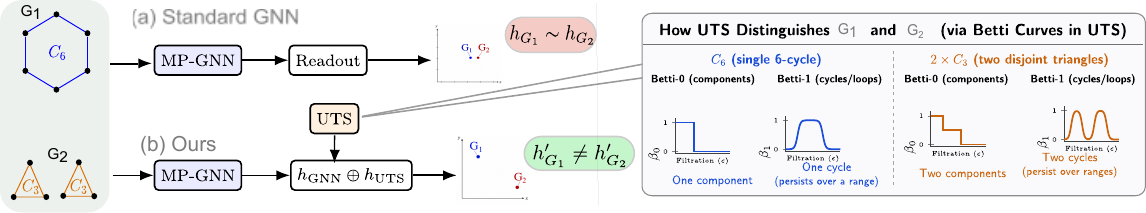}
    \caption{Illustration of UTS-based graph discrimination. Standard message-passing GNNs with global readout may map the topologically distinct graphs ${G_1}(C_6) $ and ${G_2} (2\times C_3) $ to similar embeddings due to their reliance on local neighborhood aggregation. {UTS} resolves this limitation by integrating a comprehensive topological signature derived from persistent homology. For clarity, only the \textit{UTS-Aug} with Betti-0 and Betti-1 curves are visualized, illustrating how differences in connected components and cycles contribute to distinguishing the two graphs.}
    \label{fig:uts_comparison}
\end{figure}

We instantiate our framework on Graph Isomorphism Networks (GINs)~\citep{xu2019how}, whose expressiveness matches the 1-WL hierarchy, thereby isolating the contribution of UTS from architectural capacity. We theoretically establish that UTS-augmented GINs strictly exceed the expressive power of standard GINs while improving topological fidelity and robustness to oversmoothing. Experimental evaluation on three graph classification benchmarks confirms consistent gains over competitive message-passing baselines.

\vspace{0.3em}
\noindent\textbf{Contributions.} 
Our main contributions are summarized as follows:
\begin{itemize}
    \item We introduce a dual topological signature: a layer-wise \emph{embedding} signature (Embedding-UTS), $\Phi_{\mathrm{emb}}$ tracking the evolving geometry of learned representations, and a static \emph{graph} signature (Graph-UTS) $\Phi_{\mathrm{grph}}$ characterizing input topology independently of training and immune to oversmoothing.
    \item We integrate these signatures into GNN training as auxiliary features, regularization signals, and pooling guidance and perform experiments on three graph classification benchmarks (MUTAG, PROTEINS, and COLLAB).
    \item We prove the UTS-augmented readout is strictly more expressive than the 1-WL test (Theorem~\ref{thm:expressivity}).
    \item We introduce the \textit{Oversmoothing Index (OSI)}, a graph-level metric quantifying topological degradation across layers during training, without sacrificing differentiability or training efficiency.
\end{itemize}
\section{Related work}
\label{sec:related}

Graph Neural Networks combine node features with connectivity through neighborhood aggregation. GraphSAGE introduced inductive message passing via learned sampled aggregators~\citep{hamilton2018inductiverepresentationlearninglarge}, while GIN established sum aggregation with expressive MLPs as the most discriminative scheme under the Weisfeiler-Lehman framework~\citep{xu2019how}.

Graph classification additionally requires graph-level readout. Simple sum/mean/max pooling compress graphs into first-order statistics; hierarchical methods such as DiffPool and MinCutPool instead learn differentiable cluster assignments~\citep{ying2019hierarchicalgraphrepresentationlearning, bianchi2020spectralclusteringgraphneural}, Select-Reduce-Connect unifies pooling methods by how they select, reduce, and reconnect nodes~\citep{Grattarola_2024}, and Graph Reference Distribution Learning represents graphs as distributions over node embeddings rather than a single pooled vector~\citep{wang2024graphclassificationreferencedistribution}. In contrast, our UTS-guided pooling ranks nodes by the topological richness of their embedding neighborhoods.


A separate line embeds topology directly into GNNs: TOGL uses persistent homology as a differentiable, 1-WL-exceeding layer~\citep{horn2022topologicalgraphneuralnetworks}; Wit-TopoPool applies witness-complex persistence to hierarchical pooling~\citep{chen2023topologicalpoolinggraphs}. We instead use Unified Topological Signatures (UTS)~\citep{rottach2025topologyretrievaldecodingembedding}, combining persistence with geometric/spectral statistics into a compact descriptor for post-hoc analysis. Ollivier-Ricci curvature characterizes community structure and bridge edges~\citep{ollivier2007riccicurvaturemarkovchains}, exploited by ORC-POOL for pooling~\citep{feng2024graphpoolingricciflow}; we fold curvature into UTS instead. Oversmoothing diagnostics typically rely on post-training Dirichlet energy~\citep{rusch2023surveyoversmoothing}, missing topological preservation; our Oversmoothing Index signals topology throughout training. Prior work thus couples topology to specialized mechanisms (TOGL, Wit-TopoPool, ORC-POOL) or post-hoc analysis (UTS, Dirichlet diagnostics); we repurpose UTS as a unified training-time signal for readout, regularization, and pooling.
\section{Preliminaries}
\label{sec:preliminaries}

\subsection{Problem Setting}

Let $\mathcal{D}=\{(G_i,y_i)\}_{i=1}^{N}$ denote a graph classification dataset, where each graph
$G_i=(\mathcal{V}_i,\mathcal{E}_i,\mathbf{X}_i)$ consists of a node set
$\mathcal{V}_i$, an edge set
$\mathcal{E}_i$, and node feature matrix
$\mathbf{X}_i\in\mathbb{R}^{|\mathcal{V}_i|\times d_0}$.
The objective is to learn a permutation-invariant function $f:\mathcal{G}\rightarrow\mathcal{Y},$ which maps an input graph to its class label while preserving the structural information required for graph-level discrimination. Throughout this work, we investigate how explicit topological descriptors can complement conventional message passing to improve graph representations for graph classification.

\subsection{GIN Backbone}
\label{sec:gin}

We instantiate our framework on the Graph Isomorphism Network (GIN)~\citep{xu2019how}, chosen because its sum-based aggregation matches the expressive power of the $1$-Weisfeiler--Leman (1-WL) test among message-passing GNNs. Given node representations $\mathbf{H}^{(l-1)}$, each layer updates node $v$ as

\begin{equation}
\mathbf{h}_v^{(l)}
=
\mathrm{MLP}^{(l)}
\left(
(1+\epsilon^{(l)})\mathbf{h}_v^{(l-1)}
+
\sum_{u\in\mathcal N(v)}
\mathbf{h}_u^{(l-1)}
\right),
\label{eq:gin}
\end{equation}

where $\epsilon^{(l)}$ is a learnable scalar and $\mathbf{h}_v^{(0)}=\mathbf{x}_v$. A permutation-invariant graph readout aggregates the final node embeddings into a graph representation. Our proposed framework augments this standard pipeline with explicit topological information while leaving the underlying message-passing architecture unchanged.

\section{Methodology}
\label{sec:methodology}

We introduce \emph{Dual Unified Topological Signatures}, a pair of complementary descriptors that explicitly incorporate topology into graph learning: \textbf{(i)} a dynamic {embedding} signature (\textit{Embedding-UTS}) $\Phi_{\mathrm{emb}}$ captures the evolving topology of the learned representation throughout message passing, and \textbf{(ii)} a static {graph} signature (\textit{Graph-UTS}),  $\Phi_{\mathrm{grph}}$ characterizes the intrinsic topology of the input graph. 
These signatures support three independent interventions:
\begin{enumerate}[label=(\roman*), leftmargin=*, itemsep=1pt, topsep=2pt]
    \item \textbf{\textbf{UTS-Aug:} Topology-Augmented Graph Representation} (\S\ref{subsec:descriptor}) concatenates the graph and embedding signatures with the conventional graph readout.

    \item \textbf{\textbf{UTS-Reg:}Topology-Preserving Regularization} (\S\ref{subsec:regularization}) introduces auxiliary objectives that preserve the evolution of embedding topology during training while aligning the final embedding topology with the intrinsic graph topology.

    \item \textbf{\textbf{UTS-Pool:}Topology-Aware Pooling} (\S\ref{subsec:pooling}) replaces feature-based node ranking with a topology-guided scoring mechanism derived from local embedding neighborhoods.
\end{enumerate}

Figure~\ref{fig:full_architecture} presents an overview of three approaches. Each intervention is modular and can be incorporated independently or in combination, without modifying the underlying GNN architecture. The evolution of the embedding signature additionally provides a graph-level oversmoothing diagnostic. 

\subsection{Dual Unified Topological Signatures}
\label{subsec:dual_uts}
Conventional graph readout aggregates node embeddings but lacks an explicit mechanism to encode global topology beyond message-passing representations.
To address this limitation, we introduce two complementary topological signatures that characterize the different topological perspectives of input graph: 
Graph-UTS characterizes \ul{``What topological structure is present in input graph''}, while Embedding-UTS tells \ul{``How that topological structure is reflected in the learned representations''}.
\subsubsection{Embedding-UTS}
\label{subsec:embed_uts}

Given the node embedding matrix
$\mathbf{H}^{(l)}\in\mathbb{R}^{n\times d_l}$ at layer $l$ in GNN, we interpret its rows as a point cloud in the learned representation space and compute an embedding signature $\Phi_{\mathrm{emb}}: \mathbb{R}^{n \times d} \rightarrow \mathbb{R}^{14}$
which summarizes the geometry of the embedding through geometric, persistent homology, and spectral descriptors. \rebuttal{We use two variants of this signature depending on whether gradients must propagate through it: an \emph{exact} variant, computed via standard persistent homology and eigendecomposition, used where the signature serves only as a static input feature (UTS-Aug, \S\ref{subsec:descriptor}); and a \emph{differentiable surrogate}, denoted $\Phi_{\mathrm{emb}}^{\mathrm{diff}}$, which replaces these exact operations with differentiable relaxations and is used wherever the signature must be trained through (UTS-Reg, \S\ref{subsec:regularization}; UTS-Pool, \S\ref{subsec:pooling}). Both variants share the feature structure below.}
The resulting embedding descriptor is 
{\small
\begin{equation}
    \mathbf{s}^{(l)} = \Phi_{\mathrm{emb}}\!\left(\mathbf{H}^{(l)}\right)
    = \big[\,
        \underbrace{\bar{\ell}^{(0)},\, H^{(0)},\,
        \bar{\ell}^{(1)},\, H^{(1)},\,
        \beta_0,\, \beta_1}_{\text{persistence}},\,
        \underbrace{\mu_{\mathrm{nn}},\, \sigma_{\mathrm{nn}},\,
        \Delta,\, \hat{d}}_{\text{local geometry}},\,
        \underbrace{\lambda_1,\, \lambda_2,\, \lambda_3,\,
        H_{\mathrm{spec}}}_{\text{spectral}}
    \,\big] \in \mathbb{R}^{14}.
    \label{eq:embed_uts_vec}
\end{equation}}
Here $\bar{\ell}^{(k)}$, $H^{(k)}$, and $\beta_k$ denote the mean
lifetime, persistence entropy, and Betti number of $k$-dimensional
homology; $\mu_{\mathrm{nn}}$ and $\sigma_{\mathrm{nn}}$ are the mean and
standard deviation of $k$-NN distances; $\Delta$ is the diameter of point
cloud $\mathcal{P}$; $\hat{d}$ estimates its intrinsic dimensionality; and
$\lambda_1 \leq \lambda_2 \leq \lambda_3$ with $H_{\mathrm{spec}}$ are the
smallest eigenvalues and spectral entropy of a Laplacian. \rebuttal{For $\Phi_{\mathrm{emb}}$, $\beta_k$ and the Laplacian are computed exactly; for $\Phi_{\mathrm{emb}}^{\mathrm{diff}}$, both are differentiable relaxations (soft Betti numbers, a regularized Laplacian). Complete derivations of both variants are provided in Appendix~\ref{app:embed_uts_full}.}

\subsubsection{Graph-UTS}
\label{subsec:graph_uts}

Complementary to the embedding signature, we compute a static graph signature directly from the input topology, $\Phi_{\mathrm{grph}}: \mathcal{G} \rightarrow \mathbb{R}^{27}$ which is evaluated once for every graph and cached throughout training. 
The graph signature combines curvature, persistent homology, spectral, distance, and structural statistics into a compact topological representation, 
{\small
\begin{equation}
    \mathbf{g} = \Phi_{\mathrm{grph}}(G)
    = \big[\;
        \underbrace{\phi_{\mathrm{ORC}},\,
        \phi_{\mathrm{FRC}}}_{\text{curvature}},\,
        \underbrace{\bar\ell_G,\, \mathrm{diam}(G)}_{\text{distance}},\,
        \underbrace{\lambda_2^G,\, \lambda_n^G,\,
        H_{\mathrm{spec}}^G}_{\text{spectral}},\,
        \underbrace{\bar\ell^{(0)}_G,\, H^{(0)}_G,\,
        \bar\ell^{(1)}_G,\, H^{(1)}_G}_{\text{persistence}},\,
        \underbrace{\phi_{\mathrm{struct}}}_{\text{structural}}
    \;\big] \in \mathbb{R}^{27}.
    \label{eq:graph_uts_vec}
\end{equation}}
Here $\phi_{\mathrm{ORC}}, \phi_{\mathrm{FRC}} \in \mathbb{R}^4$ summarize
Ollivier--Ricci and Forman--Ricci edge curvatures; $\bar\ell_G$ and
$\mathrm{diam}(G)$ are the mean shortest-path length and diameter of graph;
$\lambda_2^G$, $\lambda_n^G$, and $H_{\mathrm{spec}}^G$ are the algebraic
connectivity, largest eigenvalue, and spectral entropy of the combinatorial
Laplacian; $\bar\ell^{(k)}_G$ and $H^{(k)}_G$ are exact $k$-dimensional
persistence lifetimes and entropies; and $\phi_{\mathrm{struct}} \in
\mathbb{R}^{10}$ collects elementary structural statistics including betweenness centrality, clustering coefficients, and more.
Complete feature definitions are provided in Appendix~\ref{app:graph_uts_full}. Unlike $\Phi_{\mathrm{emb}}$, this descriptor is independent of the learned embeddings and therefore remains unaffected by message passing or oversmoothing.
\begin{figure*}[t]
\centering
\begin{subfigure}[b]{0.33\linewidth}
\centering
\resizebox{!}{3.3cm}{%
\begin{tikzpicture}[
  box/.style={draw, rounded corners, fill=blue!8, minimum width=2.0cm, minimum height=0.5cm, font=\small, align=center},
  sbox/.style={draw, rounded corners, fill=orange!15, minimum width=1.5cm, minimum height=0.5cm, font=\small, align=center},
  arr/.style={-{Stealth}, thick}, node distance=0.45cm
]
  \node[box] (input) {Input Graph $G$};
  \node[box,below=0.5cm of input] (Gin)   {GIN Final layer};
  \node[sbox, left=0.3cm of Gin] (embuts) {$\Phi_{\mathrm{emb}}(\mathbf{H}^{\mathrm{final}})$};
  \node[box, below=0.5cm of Gin] (readout) {Readout Vector $\mathbf{z}_{\mathrm{struct}}$};
  \node[sbox, right=0.15cm of readout] (guts) {$\Phi_{\mathrm{grph}}(G)$};
  \node[sbox, below=0.5cm of readout] (concat) {$\mathbf{z} = [\mathbf{z}_{\mathrm{struct}} \,\|\, \Phi_{\mathrm{grph}} \,\|\, \Phi_{\mathrm{emb}}]$};
  \node[box, below=0.4cm of concat] (cls) {Classifier MLP};

  \draw[arr] (input) -- (Gin);
  \draw[arr] (Gin) -- (readout);
  \draw[arr] (readout) -- (concat);
  \draw[arr] (concat) -- (cls);

  \draw[arr] (Gin.west) -- (embuts.east);
  \draw[arr] (embuts.south) |- (concat.west);

  \draw[arr] (input.east) -| (guts.north);
  \draw[arr] (guts.south) |- (concat.east);
\end{tikzpicture}%
}
\caption{UTS-Aug}
\end{subfigure}
\hfill \hfill
\begin{subfigure}[b]{0.35\linewidth}
\resizebox{!}{3.5cm}{%
  

\begin{tikzpicture}[
  box/.style={draw, rounded corners, fill=blue!8, minimum width=2.0cm,
    minimum height=0.5cm, font=\scriptsize, align=center},
  sbox/.style={draw, rounded corners, fill=orange!15, minimum width=1.2cm,
    minimum height=0.5cm, font=\scriptsize, align=center},
  gbox/.style={draw, rounded corners, fill=green!15, minimum width=1.2cm,
    minimum height=0.7cm, font=\scriptsize, align=center},
  arr/.style={-{Stealth}, thick}, node distance=0.4cm
]
  \node[box] (input) at (0, 3.0)  {Input graph};
  \node[box] (gin1)  at (0, 2.1) {GIN Layer 1};
  \node[box] (gin2)  at (0, 1.1) {GIN Layer 2};
  \node[box] (ginL)  at (0, 0.0) {GIN Layer $L$};

  \node[sbox] (s1) at (1.9, 2.1) {$\mathbf{s}^{(1)}$};
  \node[sbox] (s2) at (1.9, 1.1) {$\mathbf{s}^{(2)}$};
  \node[sbox] (sL) at (1.9, 0.0) {$\mathbf{s}^{(L)}$};

  \node[gbox] (guts) at (-1.75, 1.55) {$\Phi_{\mathrm{grph}}(G)$};

  \draw[arr] (input) -- (gin1);
  \draw[arr, dotted, line width=1.1pt] (gin1) -- (gin2);
  \draw[arr, dotted, line width=1.1pt] (gin2) -- (ginL);

  \draw[arr] (gin1) -- (s1);
  \draw[arr] (gin2) -- (s2);
  \draw[arr] (ginL) -- (sL);

  \draw[arr] (input.west) -| (guts.north);

  \coordinate (midpt) at (1.9, -0.7);
  \draw[green!50!black, dashed, thick, -{Stealth[scale=0.8]}]
    (guts.south) |- (midpt) -- (sL.south);
  \node[green!45!black, font=\normalsize, align=center] at (0.5, -0.7)
    {\scriptsize Topology\\\scriptsize alignment (Eq.4)};

  \draw[red, decorate, decoration={brace, amplitude=6pt, raise=5pt}]
    (s1.north east) -- (sL.south east)
    node[midway, right=10pt, font=\normalsize, align=left, red]
    {\scriptsize Prevent\\\scriptsize abrupt\\\scriptsize topological\\\scriptsize representation\\\scriptsize across\\\scriptsize layers\\ \scriptsize (Eq.5)};
\end{tikzpicture}
}
\caption{UTS-Reg}
\end{subfigure}
\hfill
\begin{subfigure}[b]{0.3\linewidth}
\centering
\resizebox{!}{3.5cm}{%
\begin{tikzpicture}[
  box/.style={draw, rounded corners, fill=blue!8, minimum width=2.2cm, minimum height=0.5cm, font=\scriptsize, align=center},
  sbox/.style={draw, rounded corners, fill=orange!15, minimum width=2.2cm, minimum height=0.5cm, font=\scriptsize, align=center},
  arr/.style={-{Stealth}, thick}, node distance=0.4cm
]
  \node[box] (input) {Input Graph $G$};
  \node[box, below=of input] (gin1) {GIN Layer 1..$L$};
  \node[sbox, below=of gin1] (pool) {UTS-Pool};
  \node[box, below=of pool] (readout) {Readout (SUM+MEAN+MAX)};
  \node[box, below=of readout] (cls) {Classifier MLP};
  \draw[arr] (input) -- (gin1); \draw[arr] (gin1) -- (pool);
  \draw[arr] (pool) -- (readout); \draw[arr] (readout) -- (cls);
\end{tikzpicture}%
}
\caption{UTS-Pool}
\end{subfigure}
\caption{Overview of Dual-UTS interventions applied on the baseline GIN architecture.}
\label{fig:full_architecture}
\end{figure*}
\subsection{UTS-Aug: Topology-Enhanced Graph Representation}
\label{subsec:descriptor}


We augment the graph representation obtained from conventional readout with the topological signatures introduced in Section~\ref{subsec:graph_uts}. Graph-UTS $\Phi_{\mathrm{grph}}$ encodes the intrinsic topology of the input graph and remains invariant throughout optimization, whereas Embedding-UTS $\Phi_{\mathrm{emb}}$ captures the evolving topological information of intermediate representations during message passing.

Let $\mathbf{z}_{\mathrm{struct}} \in \mathbb{R}^{d}$ denote the graph representation obtained from the underlying GNN through a standard permutation-invariant readout. The proposed topology-aware representation is constructed by concatenating the graph signature and embedding signature of last layer, 
\[
\mathbf{z}
=
\left[
\mathbf{z}_{\mathrm{struct}}
\;\|\;
\Phi_{\mathrm{grph}}(G) 
\;\|\;
\Phi_{\mathrm{emb}}(\textbf{H}^{(l)}) 
\right],
\label{eq:augmented_readout}
\]
 where $\|$ denotes vector concatenation. The augmented representation $\mathbf{z}$ is subsequently passed to the graph classifier. To isolate the contribution of each signature, we additionally evaluate single-signature variants by concatenating $\mathbf{z}_{\mathrm{struct}}$ with either $\Phi_{\mathrm{grph}} $ or $\Phi_{\mathrm{emb}}$ alone in section~\ref{subsec:descriptor_results}.



\subsection{UTS-Reg: Topology-Preserving Regularization}
\label{subsec:regularization}


While \textbf{UTS-Aug} augments the final graph representation with explicit topological information, it does not constrain the evolution of topology during message passing. \textbf{UTS-Reg} addresses this through two regularization objectives: (i) aligning the topology of final embedding space with the intrinsic topology of input graph, and (ii) enforcing topological consistency across successive GNN layers. Together, these objectives preserve structural fidelity, reduce representation collapse and thus mitigate oversmoothing throughout optimization.

\paragraph{Topological Alignment Loss}

Since, the graph signature $\Phi_{\mathrm{grph}}(G)$ and embedding signature \rebuttal{$\Phi_{\mathrm{emb}}^{\mathrm{diff}}(\mathbf{H}^{(L)})$} reside in different feature spaces, we learn a linear projection $\mathbf{W}_{\mathrm{align}}\in\mathbb{R}^{14\times27}$ to map the graph-signature into the embedding-signature space and minimize:
\begin{equation}
\mathcal{L}_{\mathrm{topo-align}}
=
\frac{1}{N}
\sum_{i=1}^{N}
\left\|
\Phi_{\mathrm{emb}}^{\mathrm{diff}}(\mathbf{H}_i^{(L)})
-
\mathbf{W}_{\mathrm{align}}
\Phi_{\mathrm{grph}}(G_i)
\right\|_2^2,
\label{eq:topo_reg}
\end{equation}
where $\mathbf{W}_{\mathrm{align}}$ is jointly optimized with the GNN. Since $\Phi_{\mathrm{grph}}$ is fixed, gradients propagate only through \rebuttal{$\Phi_{\mathrm{emb}}^{\mathrm{diff}}$} and $\mathbf{W}_{\mathrm{align}}$, encouraging the learned embedding topology to align with the intrinsic topology of the input graph.
\paragraph{Topological Evolution Loss}


\rebuttal{We additionally penalize abrupt topological changes across successive message-passing layers. Let $ \mathbf{s}^{(l)}=\Phi_{\mathrm{emb}}^{\mathrm{diff}}\!\left(\mathbf{H}^{(l)}\right)
$ denote the embedding signature at layer $l$.} The layer-wise smoothness objective is
\begin{equation}
\mathcal{L}_{\mathrm{topo-evol}}
=
\frac{1}{L-1}
\sum_{l=1}^{L-1}
\left\|
\mathbf{s}^{(l)}
-
\mathbf{s}^{(l-1)}
\right\|_2^2,
\label{eq:layer_smooth}
\end{equation}

which acts as a \textit{fitting} constraint and encourages smooth topological transitions across layers i.e., do not change the topology too much from the initial one, preventing representation collapse and oversmoothing.


\textbf{Total Training Objective:}
The final optimization objective combines the task loss with both topology-preserving regularizers,

\begin{equation}
\mathcal{L}
=
\mathcal{L}_{\mathrm{task}}
+
\mathcal{L}_{\mathrm{topo-align}}
+
\mathcal{L}_{\mathrm{topo-evol}},
\label{eq:total_loss}
\end{equation}


\subsection{UTS-Pool: Topology-aware pooling}
\label{subsec:pooling}

Existing graph pooling methods~\citep{Grattarola_2024,
bianchi2020spectralclusteringgraphneural,
ying2019hierarchicalgraphrepresentationlearning} rank nodes using feature activations or attention scores, emphasizing local semantic relevance without explicitly accounting for local topological structure. In our work, we rank 
nodes based on the topological richness of their embedding neighborhoods, utilizing a differentiable version of $\Phi_{\mathrm{emb}}$ which preserves structurally critical regions during graph coarsening. In addition, we also propose a lightweight geometric approximation of node scoring on large dense graphs to improve scalability.
\paragraph{Topology-Aware Node Scoring}
For each node $v$, let $\mathbf{H}_{\mathcal{N}(v)} = \{\mathbf{h}_u: u \in \mathcal{N}(v) \cup \{v\}\}$ denote the embeddings within its one-hop neighborhood. We compute a local topological signature and transform it into an importance score, 
\[\alpha_v
=
\sigma
\!\left(
f_\theta
\!\left(
\Phi_{\mathrm{emb}}^{\mathrm{diff}}
(
\mathbf H_{\mathcal N(v)}
)
\right)
\right),
\label{eq:uts_score}
\]
where $f_\theta:\mathbb R^{14}\rightarrow\mathbb R$ is a two-layer MLP, $\sigma(.)$ is non-linear function and $\Phi_{\mathrm{emb}}^{\mathrm{diff}}$
denotes differentiable embedding signature (details in Appendix~\ref{app:diff_uts}).
The top-k=
$\lfloor\rho n\rfloor$ out of $n$
nodes are retained, 
$
\mathcal V'
=
\operatorname{top\text{-}k}
(
\{\alpha_v\}_{v\in\mathcal V}),
$ 
where $\rho$ denotes the pooling ratio. The pooled graph
$G'=(\mathcal V',\mathcal E',\mathbf H')$
is obtained by preserving edges between retained nodes and projecting the corresponding node embeddings,

\begin{equation}
\mathbf H'
=
\operatorname{ReLU}
(
\mathbf W_{\mathrm{proj}}
\mathbf H[\mathcal V']
).
\end{equation}

\paragraph{Efficient Variant for Large Graphs}

Computing the embedding signature in large dense graph is  becomes computationally expensive owing to large scale persistent homology, triangle enumeration, and spectral computations for every node neighborhood. To improve scalability, we introduce a lightweight geometric approximation that replaces the full signature with five differentiable local descriptors,  

\begin{equation}
    \phi_v^{\mathrm{light}} = \left[
        \mu_{\mathrm{nn}}^v,\;
        \sigma_{\mathrm{nn}}^v,\;
        \Delta^v,\;
        \bar{r}_1^v,\;
        |\mathcal{N}(v)|/k_{\max}
    \right] \in \mathbb{R}^5,
    \label{eq:light_scorer}
\end{equation}

where each quantity is computed within the local neighbourhood of $v$. The resulting scorer reduces the computational cost from cubic topological computations to local quadratic operations while preserving the geometric characteristics most relevant for node ranking. During training, neighbourhoods are processed in chunks of size $C$ (default $C=64$), bounding peak GPU memory usage to $O(Cd)$ independent of graph size.

The lightweight scorer $\phi_v^{\mathrm{light}}$ (Eq.~\ref{eq:light_scorer}) is a
geometric proxy for the full embedding signature $\Phi_{\mathrm{emb}}^{\mathrm{diff}}$
We treat this approximation rigorously in
Appendix~\ref{app:proofs}, where we show that $\phi_v^{\mathrm{light}}$ certifies an
explicit scale window, noise floor, and dimension cap governing when a node's local
neighbourhood can carry nontrivial persistent topology at all — i.e.\ the conditions
under which the full topological computation would be uninformative and the
lightweight proxy suffices.

\section{Experiments and Results}
\label{sec:experiments}

\subsection{Experimental Protocol}

We evaluate the proposed modified GIN framework using three UTS interventions (\textit{UTS-Aug, UTS-Reg, UTS-Pool}) and perform experiments on four graph classification benchmarks spanning molecular (MUTAG), biological (PROTEINS), social-network (COLLAB), and large-scale biological (\rebuttal{ogbg-ppa}) domains. Details are provided in Appendix~\ref{app:datasets}.

In Table~\ref{tab:descriptor} and Table~\ref{tab:loss}, the proposed UTS-Aug and UTS-Reg interventions are compared against the unmodified GIN (GIN with only standard readout features, $\mathbf{z}_{\mathrm{struct}}$) and unregularized GIN (GIN trained with task-specific objective, $\mathcal{L}_{\mathrm{task}}$ only) variants respectively. In Table~\ref{tab:pooling}, modified GIN with UTS-Pooling is compared with three pooling baselines, TopKPool \citep{cangea2018sparsehierarchicalgraphclassifiers}, SAGPool \citep{pmlr-v97-lee19c}, and TOGL \citep{horn2022topologicalgraphneuralnetworks}. Across all four benchmarks, the GIN backbone is trained under identical experimental settings (hidden size 128, readout dimension 384) to isolate the contribution of the proposed topology-aware components.

\rebuttal{For MUTAG, PROTEINS, and COLLAB, which lack a standardized split, we employ 10-fold stratified cross-validation, using 9 seeds for MUTAG and 5 seeds for PROTEINS and COLLAB. Each fold uses a 90/10 train-test split, with 10\% of the training set further held out for validation. For \textbf{ogbg-ppa}, we instead use the official, externally-fixed OGB species split~\citep{hu2020ogb} and report mean $\pm$ std over multiple seeds, following standard OGB leaderboard convention; this split holds out entire species unseen during training, testing out-of-distribution generalization rather than in-distribution accuracy. Full aggregate statistics, including confidence intervals and paired-significance $p$-values for all four datasets, are provided in Appendix~\ref{app:ablation}.}
The GIN backbone uses four layers for MUTAG and PROTEINS, three layers for COLLAB to mitigate oversmoothing in dense social graphs, \rebuttal{and three layers for ogbg-ppa given its large average graph size (243.4 nodes)}. We report the mean and standard deviation of the classification accuracy (mean $\pm$ std) across all folds and seeds (TU datasets) or all seeds (ogbg-ppa). To ensure a fair comparison, all GIN variants are trained using identical hyperparameters and optimization settings, with only the proposed topology-aware components varying across the experiments.

\subsection{Effect of Dual Unified Topological Signatures as UTS-Aug}
\label{subsec:descriptor_results}

\rebuttal{Table~\ref{tab:descriptor} summarizes the classification performance of GIN augmented with each signature individually and jointly, under the updated 10-fold CV protocol (MUTAG/PROTEINS/COLLAB) and the official OGB split (ogbg-ppa). Graph-UTS achieves the best performance on MUTAG ($+5.8\%$ over unmodified GIN, not statistically significant at this sample size), while Dual-UTS achieves the best performance on PROTEINS ($+2.3\%$, significant), COLLAB ($+2.6\%$, significant), \newcol{and ogbg-ppa ($+3.22$ points, significant)}, with Graph-UTS second-best on all three. The two signatures are complementary on PROTEINS, COLLAB, \newcol{and ogbg-ppa}, with their combination outperforming either individually, while Graph-UTS alone remains strongest specifically on the smaller, sparser MUTAG graphs.}

\begin{table}[t]
\centering
\setlength{\tabcolsep}{1.5pt}
\caption{ Accuracy comparison of \textit{UTS-Aug} GIN backbone with $\Phi_{\mathrm{emb}}$, $\Phi_{\mathrm{grph}}$, or both (Dual UTS), with the unmodified GIN. \best{Red} and \second{Grey} mark the best and second best variants. \newcol{Teal column: ogbg-ppa, official OGB split, accuracy in percentage points.} $^{*}p<0.05$, $^{**}p<0.01$, $^{***}p<0.001$ (paired $t$-test vs.\ unmodified GIN); unmarked values are not significant. Full statistics in Appendix~\ref{app:ablation}.}
\label{tab:descriptor}
\small
\begin{tabular}{llcccc}
\toprule
 & \textbf{Variant} & \textbf{MUTAG} & \textbf{PROTEINS} & \textbf{COLLAB} & \newcol{\textbf{ogbg-ppa}} \\
\midrule
 & GIN (unmodified) & $0.832 \pm 0.084$ & $0.740 \pm 0.034$ & $0.810 \pm 0.029$ & \newcol{$67.84 \pm 0.917$} \\

\textit{UTS-Aug} GIN\ldelim\{{3}{2mm}[]
& Embedding-UTS & $0.872 \pm 0.079$ & $0.751 \pm 0.031^{*}$ & $0.823 \pm 0.028^{**}$ & \newcol{$69.27 \pm 0.903^{*}$} \\
& Graph-UTS & \best{0.890 $\pm$ 0.076} & \second{0.759 $\pm$ 0.040$^{*}$} & \second{0.832 $\pm$ 0.029$^{**}$} & \newcol{\second{$70.20 \pm 0.951^{**}$}} \\
& Dual UTS & \second{0.878 $\pm$ 0.078} & \best{0.763 $\pm$ 0.036$^{*}$} & \best{0.837 $\pm$ 0.028$^{***}$} & \newcol{\best{$71.06 \pm 0.934^{**}$}} \\
\bottomrule
\end{tabular}
\end{table}

\rebuttal{The effectiveness of $\Phi_{\mathrm{emb}}$ alone is more modest than that of the graph signature but is positive across all datasets: it improves over the unmodified GIN on MUTAG ($+4.0\%$, not significant), PROTEINS ($+1.1\%$, significant), COLLAB ($+1.3\%$, significant), \newcol{and ogbg-ppa ($+1.43$ points, significant)}. Combining both signatures (Dual-UTS) achieves the best performance on PROTEINS, COLLAB, \newcol{and ogbg-ppa}, demonstrating that the static and dynamic signatures provide complementary structural information on these datasets, while Graph-UTS alone remains the strongest single descriptor on MUTAG. \newcol{Notably, the descriptor gains on ogbg-ppa hold under a species-level train/test split that requires generalizing to entirely unseen species, suggesting the topological signal captured by UTS is not merely fitting to species-specific idiosyncrasies present in the training distribution.}}

\subsection{Effect of UTS-Reg (Topology-Preserving Regularization)}
\label{subsec:loss_results}

\rebuttal{The previous experiment showed that Embedding-UTS is sensitive to representation quality. We therefore evaluate whether UTS-Reg improves it by preserving topological fidelity during optimization. Table~\ref{tab:loss} shows the two regularizers behave complementarily: the evolution loss gives the largest gain on MUTAG ($+1.9\%$, not significant), while the alignment loss performs best on PROTEINS ($+1.4\%$, significant) \newcol{and on ogbg-ppa ($+1.70$ points, significant)}. Neither loss improves over baseline on COLLAB, and both individual losses are significantly negative there, indicating the benefit of topology-preserving supervision is dataset dependent -- consistent with the layer-wise Oversmoothing Index analysis in Appendix~\ref{app:osi_results}, which shows the same evolution loss reduces representation collapse on MUTAG but increases it on PROTEINS and COLLAB. \newcol{On ogbg-ppa, all three loss variants improve over baseline and are significant, unlike on COLLAB; we discuss this cross-dataset inconsistency further in \S\ref{subsec:discussion}.} We discuss the theoretical implications of this dataset-dependence in Section~\ref{sec:theory}.}

\begin{table}[h]
\centering
\caption{ Comparison of \textit{UTS-Reg} GIN backbone trained with the topological evolution loss ($\mathcal{L}_{\mathrm{topo-evol}}$), the topological alignment loss ($\mathcal{L}_{\mathrm{topo-align}}$), and their combination, with the unregularized GIN. \best{Red} and \second{Grey} mark the best and second best variant. \newcol{Teal column: ogbg-ppa.} Significance markers as in Table~\ref{tab:descriptor}, paired vs.\ unregularized GIN. Full statistics in Appendix~\ref{app:ablation}.}
\label{tab:loss}
\small
\setlength{\tabcolsep}{1.5pt}
\begin{tabular}{llcccc}
\toprule
&\textbf{Variant} & \textbf{MUTAG} & \textbf{PROTEINS} & \textbf{COLLAB} & \newcol{\textbf{ogbg-ppa}} \\
\midrule
&GIN (unregularized)                 & $0.832 \pm 0.084$ & \second{$0.740 \pm 0.034$} & \best{0.810 $\pm$ 0.029} & \newcol{$67.84 \pm 0.917$} \\
\textit{UTS-Reg} GIN\ldelim\{{3}{2mm}[]
&$\mathcal{L}_{\mathrm{topo-evol}}$  & \best{0.850 $\pm$ 0.085} & $0.738 \pm 0.034^{*}$ & $0.805 \pm 0.030^{*}$ & \newcol{\second{$68.36 \pm 0.982^{*}$}} \\
&$\mathcal{L}_{\mathrm{topo-align}}$  & \second{0.842 $\pm$ 0.088} & \best{0.755 $\pm$ 0.033$^{*}$} & $0.806 \pm 0.030^{*}$ & \newcol{\best{$69.54 \pm 0.941^{*}$}} \\
&Combination        & $0.841 \pm 0.084$ & 0.736 $\pm$ 0.035$^{*}$ & \second{0.809 $\pm$ 0.032$^{*}$} & \newcol{$68.15 \pm 1.006^{*}$} \\
\bottomrule
\end{tabular}
\end{table}

\rebuttal{Combining both regularizers, moreover, does not consistently outperform either alone, indicating that the two losses contribute differently across graph domains and their interaction requires dataset-specific balancing. This mirrors Section~\ref{subsec:descriptor_results}, where the embedding signature proved more sensitive to representation quality than the graph signature. \newcol{On ogbg-ppa, the combination is also weaker than the alignment loss alone, though it remains significantly positive, unlike the negative combination effect observed on PROTEINS and COLLAB.}}

\subsection{Evaluation of UTS-Pool (Topology-aware pooling)}
\label{subsec:pooling_results}

\rebuttal{Table~\ref{tab:pooling} compares UTS-Pool against TOGL, SAGPool, and TopKPool under the same stratified 10-fold CV protocol (MUTAG/PROTEINS/COLLAB) and the official OGB split (ogbg-ppa). UTS-Pool achieves the best mean accuracy on PROTEINS and COLLAB, with the advantage over TopKPool and SAGPool statistically significant on both datasets, and a smaller but still significant advantage over TOGL ($+0.1$ percentage points on both). On MUTAG, TOGL is nominally ahead of UTS-Pool ($0.840$ vs.\ $0.835$), though the difference does not reach significance; UTS-Pool remains ahead of SAGPool and TopKPool on MUTAG, but this gap is likewise not significant. \newcol{On ogbg-ppa UTS-Pool remains significantly ahead of SAGPool and TopKPool on ogbg-ppa. TOGL is slightly but signficantly better than UTS-Pool ($68.97$ vs.\ $68.85$, $\Delta=-0.12$, $p=0.0008$). }Notably, the COLLAB gains are obtained using the lightweight geometric scorer $\phi^{\mathrm{light}}_v$, demonstrating that even a computationally efficient local geometric proxy provides sufficient structural information to guide scalable and robust graph coarsening \newcol{-- the same lightweight scorer is used for UTS-Pool on ogbg-ppa, given its even larger average graph size (243.4 nodes)}.}

\begin{table}[h]
\centering
\caption{ Accuracy comparison of UTS-Pool against TOGL, TopKPool, and SAGPool, same GIN backbone. \best{Red} and \second{Grey} mark the best and second best variant. \newcol{Teal column: ogbg-ppa.} $^{*}p<0.05$, $^{**}p<0.01$, $^{***}p<0.001$ (paired $t$-test, variant vs.\ UTS-Pool); unmarked values are not significant. Full statistics in Appendix~\ref{app:ablation}. $^{\dagger}$Our faithful reimplementation of TOGL~\citep{horn2022topologicalgraphneuralnetworks}, adapted to this codebase; see Appendix~\ref{app:pooling} for implementation details.}
\label{tab:pooling}
\small
\begin{tabular}{lcccc}
\toprule
\textbf{Variant} & \textbf{MUTAG} & \textbf{PROTEINS} & \textbf{COLLAB} & \newcol{\textbf{ogbg-ppa}} \\
\midrule
TopKPool       & $0.829 \pm 0.079$ & $0.741 \pm 0.033^{***}$ & $0.818 \pm 0.029^{*}$ & \newcol{$67.53 \pm 0.988^{***}$} \\
SAGPool        & $0.831 \pm 0.080$ & $0.742 \pm 0.034^{**}$  & $0.819 \pm 0.029^{*}$ & \newcol{$67.88 \pm 0.963^{***}$} \\
TOGL$^{\dagger}$ & \best{0.840 $\pm$ 0.083} & \second{0.744 $\pm$ 0.033$^{*}$} & \second{0.820 $\pm$ 0.029$^{*}$} & \newcol{\best{$68.97 \pm 0.926^{***}$}} \\
UTS-Pool       & \second{0.835 $\pm$ 0.079} & \best{0.745 $\pm$ 0.033} & \best{0.821 $\pm$ 0.029} & \newcol{\second{$68.85 \pm 0.891$}} \\
\bottomrule
\end{tabular}

\end{table}

Finally, Figure~\ref{fig:uts_evolution_imp} illustrates the evolution of representative components of the embedding signature across GIN layers on the MUTAG dataset. As message passing progresses, the $H_0$ mean lifetime decreases from $9.08$ to $3.51$, while the spectral gap $\lambda_1$ shrinks from $1.26$ to $0.81$, indicating progressive contraction of the embedding topology associated with oversmoothing. In contrast, the $H_1$ mean lifetime remains nearly unchanged, suggesting that higher-order topological structure is largely preserved. These trends demonstrate that UTS provides an interpretable characterization of representation dynamics during GNN optimization. Further details on the proposed \textit{Oversmoothing Index} (OSI) formulation, implementation, and experimental analysis are provided in Appendix~\ref{app:osi_results}.

\begin{figure}[t]
    \centering
    \includegraphics[width=\textwidth]{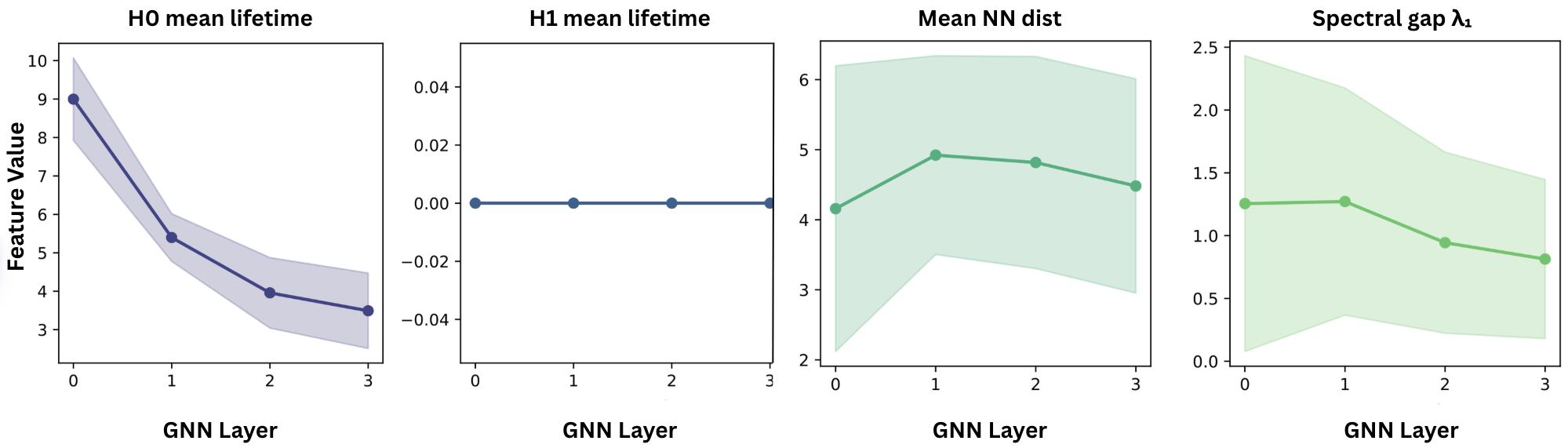}
    \caption{Evolution of some components of embedding signature ($\Phi_{\mathrm{emb}}$) across GIN layers on the MUTAG dataset. The sharp decay in $H_{0}$ mean lifetime and the shrinking spectral gap ($\lambda_{1}$) empirically diagnose representation collapse, where distinct node embeddings oversmooth into an indistinguishable manifold}
    \label{fig:uts_evolution_imp}
\end{figure}

\section{Theoretical Analysis}
\label{sec:theory}

We now establish the theoretical properties of the proposed framework. Complete proofs are deferred to Appendix~\ref{app:proofs}.
\subsection{Expressive Power of Dual Unified Topological Signatures}

\begin{theorem}
\label{thm:expressivity}
Let $\mathcal{A}$ denote a message-passing GNN whose expressive power is bounded by the $1$-Weisfeiler--Lehman (1-WL) test. Augmenting $\mathcal{A}$ with the graph signature $\Phi_{\mathrm{grph}}$ yields a graph representation capable of distinguishing graph pairs that are indistinguishable under the 1-WL test whenever their global topological signatures differ.
\end{theorem}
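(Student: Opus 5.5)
The plan is to argue by injectivity of concatenation. We then show strictness with an explicit witness pair.

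Write the augmented representation as $\mathbf{z}(G) = [\mathbf{z}_{\mathrm{struct}}(G) \,\|\, \Phi_{\mathrm{grph}}(G)]$. Here $\mathbf{z}_{\mathrm{struct}}$ is the readout of $\mathcal{A}$. Concatenation is injective on pairs, so $\mathbf{z}(G_1)=\mathbf{z}(G_2)$ forces both $\mathbf{z}_{\mathrm{struct}}(G_1)=\mathbf{z}_{\mathrm{struct}}(G_2)$ and $\Phi_{\mathrm{grph}}(G_1)=\Phi_{\mathrm{grph}}(G_2)$. By contraposition, whenever $\Phi_{\mathrm{grph}}(G_1)\neq\Phi_{\mathrm{grph}}(G_2)$, the augmented representations differ. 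This holds even if $G_1,G_2$ are 1-WL equivalent, in which case $\mathbf{z}_{\mathrm{struct}}$ coincides on them by the bound of \citet{xu2019how} recalled in Section~\ref{sec:gin}. This is the core of the statement.

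Two preliminary facts are needed. First, $\Phi_{\mathrm{grph}}$ must be a well-defined graph invariant, i.e.\ permutation invariant, so that the augmented model is still a legitimate function on isomorphism classes. I would check this component by component from the definitions in Appendix~\ref{app:graph_uts_full}. Curvature summaries, shortest-path statistics, Laplacian spectra, persistence of a permutation-equivariant filtration, and aggregated structural statistics are all invariant under relabeling. Second, the downstream classifier acts on $\mathbf{z}$. The claim concerns the representation, so it suffices that distinct inputs can be separated by some MLP, which is immediate.

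To make the extension strict and non-vacuous, I would exhibit the pair from Figure~\ref{fig:uts_comparison}: $G_1=C_6$ and $G_2=2\times C_3$ with identical node features. Both are 2-regular on six vertices, so color refinement stabilizes immediately with identical histograms, and they are 1-WL equivalent. For $\Phi_{\mathrm{grph}}$, one coordinate suffices.
\begin{itemize}
\item The algebraic connectivity $\lambda_2^G$ is $0$ for the disconnected $2\times C_3$ and $2-2\cos(2\pi/6)=1>0$ for $C_6$.
\item Alternatively, $\mathrm{diam}(G)$ is $3$ versus $\infty$ or a convention value for the disconnected graph.
\item Alternatively, the clustering coefficient in $\phi_{\mathrm{struct}}$ is $0$ versus $1$.
\end{itemize}
Hence $\mathbf{z}(G_1)\neq\mathbf{z}(G_2)$ while $\mathbf{z}_{\mathrm{struct}}(G_1)=\mathbf{z}_{\mathrm{struct}}(G_2)$. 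Conversely, anything 1-WL separates is still separated, since $\mathbf{z}_{\mathrm{struct}}$ is retained. Together these give strict containment.

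The main obstacle is definitional rather than technical. The concrete conventions for $\Phi_{\mathrm{grph}}$ on disconnected graphs, such as the diameter and mean path length, and the chosen persistence filtration must be fixed so that the witness coordinate really differs and remains a finite, invariant real number. I would therefore rely on $\lambda_2^G$ or the clustering coefficient, whose values are unambiguous. I would also state explicitly that the theorem is conditional: it does not claim $\Phi_{\mathrm{grph}}$ separates all non-isomorphic pairs.
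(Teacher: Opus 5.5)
Your proof has the same skeleton as the paper's. 1-WL equivalence forces $\mathbf{z}_{\mathrm{struct}}(G_1)=\mathbf{z}_{\mathrm{struct}}(G_2)$, concatenation is injective, and $C_6$ versus $2\times C_3$ is the witness. The difference is which coordinate you use to certify $\Phi_{\mathrm{grph}}(C_6)\neq\Phi_{\mathrm{grph}}(2\times C_3)$, and your choice is the sounder one.

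The paper argues via $\beta_1(C_6)=1$ versus $\beta_1(2\times C_3)=2$. This has two problems:
\begin{itemize}
\item $\Phi_{\mathrm{grph}}$ contains only persistence mean lifetimes and entropies, not Betti numbers.
\item Those values are cycle ranks of the graphs, not what the Vietoris--Rips filtration of the shortest-path metric records.
\end{itemize}
In that filtration, each triangle of $2\times C_3$ is filled at the same scale $1$ at which its edges enter, so its $H_1$ diagram is empty. By contrast, $C_6$ carries one bar $[1,2)$, because at scale $2$ the complex becomes the octahedral $2$-sphere. So the persistence coordinates do differ, but not for the reason the paper gives.

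Your witnesses read directly off the definitions in Appendix~\ref{app:graph_uts_full}, with no filtration, empty-diagram, or infinite-bar conventions to settle. These are $\lambda_2^G$ ($1$ vs.\ $0$) and mean clustering ($0$ vs.\ $1$). Further witnesses of the same kind are $\lambda_n^G$ ($4$ vs.\ $3$) and $N_{\mathrm{cc}}$ ($1$ vs.\ $2$). Since the paper computes distance features on the largest component, $\mathrm{diam}(G)$ also works ($3$ vs.\ $1$), which settles your convention worry there. You also supply the permutation-invariance check of $\Phi_{\mathrm{grph}}$ that the paper leaves implicit.

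One minor caveat lies outside the statement itself. Your converse, that everything 1-WL separates stays separated, holds only if $\mathcal{A}$ actually attains 1-WL power (e.g.\ GIN with injective updates). Retaining $\mathbf{z}_{\mathrm{struct}}$ preserves only what $\mathcal{A}$ itself separates.
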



\subsection{Topology-Preserving Regularization}
\rebuttal{\begin{proposition}[Topo-Evolution Loss: Heuristic Motivation]
\label{prop:topo-evol-heuristic}
The topo-evolution loss $\mathcal{L}_{\mathrm{smooth}}$
(Eq.~\ref{eq:layer_smooth}) heuristically discourages representation
collapse by penalizing layer-to-layer divergence of the embedding-space
topological signature $\mathbf{s}^{(l)}$. This is a first-order,
directional argument, not a proof that $\mathcal{L}_{\mathrm{smooth}}$
enforces a variance bound or guarantees reduced Oversmoothing Index
(Eq.~\ref{eq:osi}); the effect is empirically dataset-dependent (see
Appendix~\ref{app:prop1_full} for the full justification and OSI
comparison across datasets).
\end{proposition}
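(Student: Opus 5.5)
The plan is to make the heuristic quantitative in the only direction it can honestly go: a lower bound on $\mathcal{L}_{\mathrm{topo-evol}}$ for collapsed trajectories, followed by a first-order reading of its gradient. It does not attempt a variance or OSI guarantee, which the statement explicitly disclaims. The anchor is that $\mathbf{s}^{(0)}=\Phi_{\mathrm{emb}}^{\mathrm{diff}}(\mathbf{H}^{(0)})=\Phi_{\mathrm{emb}}^{\mathrm{diff}}(\mathbf{X})$ depends only on the input features. It is therefore a fixed reference point that the optimizer cannot move.

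\textbf{Step 1: telescoping lower bound.} The triangle inequality gives $\|\mathbf{s}^{(L-1)}-\mathbf{s}^{(0)}\|_2\le\sum_{l=1}^{L-1}\|\mathbf{s}^{(l)}-\mathbf{s}^{(l-1)}\|_2$. Cauchy--Schwarz then gives
\[
\mathcal{L}_{\mathrm{topo-evol}}\;\ge\;\frac{\|\mathbf{s}^{(L-1)}-\mathbf{s}^{(0)}\|_2^2}{(L-1)^2}.
\]
Any trajectory whose deepest penalized signature drifts far from the input signature therefore pays a proportional penalty, however the drift is spread across layers.

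\textbf{Step 2: collapse moves the signature a definite distance.} Model collapse as $\mathbf{H}^{(l)}\to\mathbf{1}\mathbf{c}^\top$, meaning all rows converge to a common point. For this limit I will identify the values of the geometric and persistence coordinates of $\Phi_{\mathrm{emb}}^{\mathrm{diff}}$:
\begin{itemize}
\item pairwise distances vanish, so $\mu_{\mathrm{nn}},\sigma_{\mathrm{nn}},\Delta\to 0$;
\item finite $H_0$ lifetimes vanish, so $\bar\ell^{(0)}\to 0$, and likewise for $H_1$.
\end{itemize}
Call the resulting vector $\mathbf{s}_{\mathrm{coll}}$, defined on those coordinates. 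I will only use the coordinates where the limit is unambiguous. Spectral entropy and soft Betti numbers depend on the regularization and will be handled as remaining bounded.

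This requires continuity of the surrogate near the degenerate cloud. I will take this from Appendix~\ref{app:diff_uts}: each relaxation is a smooth function of pairwise distances, at least Lipschitz away from scale-normalization singularities. Combining with Step 1, any collapsed trajectory satisfies
\[
\mathcal{L}_{\mathrm{topo-evol}}\gtrsim\frac{\|\mathbf{s}^{(0)}-\mathbf{s}_{\mathrm{coll}}\|^2}{(L-1)^2}>0
\]
whenever $\mathbf{X}$ is non-degenerate.

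\textbf{Step 3: first-order direction.} The gradient of the last summand with respect to $\mathbf{s}^{(L-1)}$ is $2(\mathbf{s}^{(L-1)}-\mathbf{s}^{(L-2)})/(L-1)$. Chained through the Jacobian of $\Phi_{\mathrm{emb}}^{\mathrm{diff}}$, it opposes further shrinkage of the distance-type coordinates whenever they are decreasing across layers. This is exactly the "directional" claim in the statement.

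\textbf{Main obstacle.} The hard step is turning this into anything stronger, and I expect not to. Three things block it:
\begin{itemize}
\item $\mathcal{L}_{\mathrm{task}}$ and $\mathcal{L}_{\mathrm{topo-align}}$ may favour contraction, so the joint minimizer need not reduce collapse.
\item The penalty is also small for non-collapsing trajectories, so a small loss does not certify a variance bound.
\item OSI is not a monotone function of $\|\mathbf{s}^{(l)}-\mathbf{s}^{(l-1)}\|$.
\end{itemize}
I will state these as explicit caveats and point to the OSI comparison in Appendix~\ref{app:prop1_full} for the dataset-dependent empirical behaviour. This keeps the proposition as a justified heuristic rather than a theorem.
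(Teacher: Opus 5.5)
Your proposal is sound as a justification of a heuristic, but it takes a genuinely different and more quantitative route than the paper. The paper's justification in Appendix~\ref{app:prop1_full} is purely qualitative. It says the gradient of $\mathcal{L}_{\mathrm{smooth}}$ reduces layer-to-layer divergence, and that this is ``in tension'' with collapse understood as \emph{cross-graph} convergence of signatures at a fixed depth (the OSI notion, Eq.~\eqref{eq:osi}). It then lists essentially your caveats and defers to Table~\ref{tab:osi}; your Step~3 is roughly that argument. Your Steps~1--2 instead treat \emph{within-graph} metric collapse $\mathbf H^{(l)}\to\mathbf 1\mathbf c^\top$ and make the tension precise. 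In doing so they expose what the paper leaves implicit: $\mathcal{L}_{\mathrm{topo-evol}}$ depends only on consecutive differences, so it is invariant under adding a common vector to every $\mathbf s^{(l)}$. Likewise, the summand you differentiate in Step~3 pulls $\mathbf s^{(L-2)}$ toward $\mathbf s^{(L-1)}$ exactly as hard as it pulls $\mathbf s^{(L-1)}$ toward $\mathbf s^{(L-2)}$. As a function of the signatures, the loss therefore has no preference against a collapsed trajectory unless an endpoint is pinned. Your bound is only as strong as that pin, so its hypotheses deserve the same explicitness as your three caveats. First, $\mathbf s^{(0)}=\Phi(\mathbf X)$ rests on $\mathbf h_v^{(0)}=\mathbf x_v$ (Section~\ref{sec:gin}). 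Yet the layer-$0$ entries of Table~\ref{tab:osi} differ between the baseline and the regularized model, which suggests that in the implementation $\mathbf s^{(0)}$ is already a trained representation. In that case nothing is pinned, and a trajectory collapsed from the first layer onward costs nothing. Under your indexing, conversely, the final signature $\mathbf s^{(L)}$, the one feeding the readout, is never penalized. Second, non-degeneracy of $\mathbf X$ holds only weakly on every dataset used. One-hot atom/SSE types and degree features place many nodes at the same input point, and ogbg-ppa uses a single shared node embedding. Hence $\Phi(\mathbf X)$ typically already lies near $\mathbf s_{\mathrm{coll}}$ in $\bar\ell^{(0)}$ and $\mu_{\mathrm{nn}}$, and your bound is carried mainly by $\Delta$ (it is vacuous on ogbg-ppa). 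In those local-scale coordinates the pin even pulls embeddings toward coincidence rather than away from it, which is consistent with the statement's refusal to promise a uniform anti-collapse effect.
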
}





\subsection{Computational Complexity}
The graph signature $\Phi_{\mathrm{grph}}$ is computed once per graph and cached, while the embedding signature $\Phi_{\mathrm{emb}}^{\mathrm{diff}}$ is recomputed at every layer, both at cost $O(n^3)$ dominated by persistence and spectral computations. The lightweight pooling proxy $\phi_v^{\mathrm{light}}$ reduces this to $O(d\,n\,\bar\delta^2)$ per pooling layer, linear in $n$ at fixed local density. A full derivation is given in Appendix~\ref{app:complexity}.

\section{Conclusion and Future Work}
\label{sec:conclusion}

We introduced a unified topology-aware framework for graph representation learning based on Dual Unified Topological Signatures, jointly modeling the intrinsic topology of the input graph and the evolving topology of the learned embedding space throughout the GNN pipeline. Topology improves graph learning through three complementary mechanisms: topology-augmented representations, topology-preserving regularization, and topology-aware hierarchical pooling, with consistent improvements over the GIN backbone across molecular, biological, and social-network benchmarks.

Future work includes differentiable relaxations of the UTS-Pool selection operator and extending the framework to heterogeneous and temporal graph learning. \rebuttal{The full Embedding-UTS signature incurs O(n³)-per-layer cost, which limits its direct application to long-range or web-scale graph benchmarks; developing a more computationally efficient variant of the signature is a natural direction for extending this work to such settings.}
\printbibliography

\appendix
\section{Appendix}
\subsection{Full Derivations of Universal Topological Signatures}
\label{app:uts}

This appendix gives the complete definitions, differentiability
remarks, and computational notes underlying the embedding- and
graph-level signature maps introduced in
Section~\ref{subsec:embed_uts} and Section~\ref{subsec:graph_uts}.

\subsection{Embedding UTS: \texorpdfstring{$\Phi_{\mathrm{emb}}$}{PhiEmb}}
\label{app:embed_uts_full}

\subsubsection{Pairwise Distance Matrix}
\begin{equation}
    D_{uv} = \left\|\mathbf{h}_u - \mathbf{h}_v\right\|_2
           = \sqrt{\sum_{i=1}^{d_l}(h_{ui} - h_{vi})^2 + \varepsilon},
    \quad \varepsilon = 10^{-8},
    \label{eq:dist_mat}
\end{equation}
where $\varepsilon$ ensures numerical stability when
$\mathbf{h}_u = \mathbf{h}_v$ (in case of oversmoothing).



\subsubsection{Local Geometry Features (Dimensions 7--10)}
Let $\mathcal{N}_k(v)$ denote the $k$ nearest neighbours of node $v$
excluding itself. Define
\begin{align}
    \mu_{\mathrm{nn}} &= \frac{1}{n}\sum_{v \in \mathcal{V}}
        \frac{1}{k}\sum_{u \in \mathcal{N}_k(v)} D_{vu},
        \label{eq:mean_nn} \\
    \sigma_{\mathrm{nn}} &= \sqrt{\frac{1}{nk}\sum_{v}\sum_{u \in
        \mathcal{N}_k(v)}\left(D_{vu} - \mu_{\mathrm{nn}}\right)^2},
        \label{eq:std_nn} \\
    \Delta &= \max_{u,v \in \mathcal{V}} D_{uv}, \label{eq:spread} \\
    \hat{d} &= \mathrm{median}_{v \in \mathcal{V}}
        \frac{\log r_2(v) + \varepsilon}{\log r_1(v) + \varepsilon},
        \label{eq:intrinsic_dim}
\end{align}
where $r_j(v)$ is the distance to the $j$-th nearest neighbour of $v$.
Equation~\eqref{eq:intrinsic_dim} provides a correlation-dimension
estimate of the intrinsic dimensionality of the embedding manifold, and
is used as written for $\Phi_{\mathrm{emb}}$ (exact).

\paragraph{Remark on Differentiability.}
$\mu_{\mathrm{nn}}$, $\sigma_{\mathrm{nn}}$, and $\Delta$
\eqref{eq:mean_nn}--\eqref{eq:spread} are differentiable with respect to
$\mathbf{H}^{(l)}$ through \eqref{eq:dist_mat} in both variants.
\rebuttal{The $\mathrm{median}$ in \eqref{eq:intrinsic_dim}, however, has
a degenerate gradient (non-zero only at the median element), so
$\Phi_{\mathrm{emb}}^{\mathrm{diff}}$ replaces it with a $\mathrm{mean}$
over the same log-ratio, distributing gradient across all $n$ nodes; see
Appendix~\ref{app:diff_uts} for the full justification.}

\subsubsection{Persistence Features (Dimensions 1-6)}

\rebuttal{For $\Phi_{\mathrm{emb}}$ (exact), dimensions 1-6 are computed
via standard Vietoris--Rips persistent homology on the detached point
cloud using GUDHI \citep{gudhi:urm} - identical in construction to the
persistence features of $\Phi_{\mathrm{grph}}$ (\S\ref{app:graph_uts_full}),
yielding exact mean lifetimes, persistence entropies, and integer Betti
numbers $\beta_0, \beta_1$. For $\Phi_{\mathrm{emb}}^{\mathrm{diff}}$, these
exact operations are replaced by the differentiable relaxations below,
since MST extraction and simplicial homology computation are not
differentiable.}

Let $\{w_e\}$ denote the upper-triangular entries of $\mathbf{D}$,
sorted in ascending order as $w_{(1)} \leq w_{(2)} \leq \cdots \leq
w_{\binom{n}{2}}$. We define the \emph{H0 lifetime surrogate} as the
$n-1$ smallest edge weights:
\begin{equation}
    \ell_i^{(0)} = w_{(i)}, \quad i = 1, \ldots, n-1.
    \label{eq:h0_lifetimes}
\end{equation}
This approximates the persistence diagram of the 0-dimensional
homology of the Vietoris–Rips complex: in an exact Rips filtration,
connected components merge at minimum spanning tree edge weights.
Equation~\eqref{eq:h0_lifetimes} is a differentiable relaxation
since MST extraction (Kruskal's algorithm) is non-differentiable.

For 1-dimensional homology we enumerate all triangles
$\binom{\mathcal{V}}{3}$ and define, for each triangle
$\tau = (u, v, w)$ with sorted edge lengths
$e_1(\tau) \leq e_2(\tau) \leq e_3(\tau)$,
\begin{equation}
    \ell^{(1)}_\tau = \max\!\left(0,\, e_3(\tau) - e_2(\tau)\right).
    \label{eq:h1_lifetimes}
\end{equation}
This follows from the Vietoris–Rips persistence pairing: a 1-cycle
is born when the second-longest triangle edge enters the filtration
and dies when the longest edge fills the triangle.

\rebuttal{\begin{remark}[When are surrogates used?]
Surrogates ($\Phi_{\mathrm{emb}}^{\mathrm{diff}}$) are used only where
gradients must propagate through the signature: the topology-preserving
regularizers (\S\ref{subsec:regularization}) and the UTS-Pool node scorer
(\S\ref{subsec:pooling}). UTS-Aug (\S\ref{subsec:descriptor}) uses the
exact signature $\Phi_{\mathrm{emb}}$, computed on detached embeddings
with no gradient requirement. See Appendix~\ref{app:diff_uts} for the
complete differentiable-surrogate derivation.
\end{remark}}
\begin{remark}[Computational cost of H1]
Triangle enumeration scales as $O(n^3)$. We cap computation at
$n \leq n_{\max}$ (default $n_{\max} = 50$); for larger point clouds
the H1 features are set to zero. This is appropriate for large social
network graphs (e.g.\ COLLAB) where oversmoothing renders the
embedding point cloud near-degenerate in any case. \rebuttal{This cap
applies to $\Phi_{\mathrm{emb}}^{\mathrm{diff}}$ wherever it is computed
per layer -- both the topology-evolution loss (\S\ref{subsec:regularization})
and the UTS-Pool node scorer (\S\ref{subsec:pooling}) -- and does not
apply to UTS-Aug, which uses the exact signature $\Phi_{\mathrm{emb}}$
computed once on detached embeddings.}
\end{remark}

From \eqref{eq:h0_lifetimes} and \eqref{eq:h1_lifetimes} we
compute mean lifetimes and persistence entropies:
\begin{align}
    \bar{\ell}^{(k)} &= \frac{1}{|\mathcal{L}^{(k)}|}
        \sum_i \ell_i^{(k)}, \\
    H^{(k)} &= -\sum_i \hat{p}_i^{(k)} \log \hat{p}_i^{(k)}, \quad
        \hat{p}_i^{(k)} = \frac{\ell_i^{(k)}}
        {\sum_j \ell_j^{(k)} + \varepsilon}.
\end{align}
Soft Betti numbers are defined as
\begin{equation}
    \tilde{\beta}_k = \sum_i \sigma\!\left(
        10\left(\ell_i^{(k)} - \mu_{\mathrm{nn}}\right)\right),
    \label{eq:soft_betti}
\end{equation}
where $\sigma$ is the sigmoid function and $\mu_{\mathrm{nn}}$ serves
as an adaptive threshold.

\subsubsection{Spectral Features (Dimensions 11--14)}

\rebuttal{For $\Phi_{\mathrm{emb}}$ (exact), the adjacency is a standard
binary $k$-NN graph and $\tilde{\mathbf{L}}$ is the unnormalised
combinatorial Laplacian, with eigenvalues computed via
\texttt{scipy.linalg.eigh}; no soft kernel or regularisation is applied,
since differentiability is not required. For $\Phi_{\mathrm{emb}}^{\mathrm{diff}}$,
we construct a soft $k$-NN adjacency matrix via a Gaussian kernel instead,
to keep the spectral features differentiable:}

Construct a soft $k$-NN adjacency matrix via a Gaussian kernel:
\begin{equation}
    A_{uv}^{\mathrm{soft}} = \exp\!\left(
        -\frac{D_{uv}^2}{2(\mu_{\mathrm{nn}} + \varepsilon)^2}
    \right)\cdot\mathds{1}[u \neq v].
    \label{eq:soft_adj}
\end{equation}
The bandwidth $\mu_{\mathrm{nn}}$ is kept in the computation graph
(not detached), ensuring gradients flow through the spectral features
back to $\mathbf{H}^{(l)}$. Let
$\mathbf{L}_{\mathrm{soft}} = \mathbf{D}_{\mathrm{soft}} -
\mathbf{A}^{\mathrm{soft}}$
be the unnormalised Laplacian, regularised as
\begin{equation}
    \tilde{\mathbf{L}} = \frac{\mathbf{L}_{\mathrm{soft}} +
        \delta\mathbf{I}}{\max_v d_v^{\mathrm{soft}} + \varepsilon},
    \quad \delta = 10^{-4},
    \label{eq:reg_laplacian}
\end{equation}
where $d_v^{\mathrm{soft}} = \sum_u A_{uv}^{\mathrm{soft}}$.
Division by $\max_v d_v^{\mathrm{soft}}$ normalises the matrix to
$[0,1]$ scale, preventing ill-conditioning on dense graphs
(e.g.\ COLLAB with $\bar{d} \approx 8.9$).

Let $0 \leq \lambda_1 \leq \lambda_2 \leq \cdots \leq \lambda_n$ be
the eigenvalues of $\tilde{\mathbf{L}}$, computed via
\texttt{torch.linalg.eigh} (fully differentiable for symmetric
matrices). The spectral gap $\lambda_1$, and $\lambda_2, \lambda_3$
approximate the Fiedler value and higher connectivity information.
Spectral entropy is
\begin{equation}
    H_{\mathrm{spec}} = -\sum_i \hat{q}_i \log \hat{q}_i, \quad
    \hat{q}_i = \frac{\max(\lambda_i, \varepsilon)}
        {\sum_j \max(\lambda_j, \varepsilon)}.
\end{equation}

\paragraph{Stability.}
The spectral entropy $H_{\mathrm{spec}}$ is Lipschitz continuous
in the entries of $\tilde{\mathbf{L}}$ by the Weyl perturbation
theorem and the Lipschitz continuity of the entropy function on the
probability simplex.

\subsection{Graph UTS: \texorpdfstring{$\Phi_{\mathrm{grph}}$}{PhiGrph}}
\label{app:graph_uts_full}

\subsubsection{Ollivier-Ricci Curvature (Dimensions 1-4)}

For an edge $(u,v) \in \mathcal{E}$, the Ollivier-Ricci curvature
\citep{ollivier2007riccicurvaturemarkovchains} is
\begin{equation}
    \kappa(u,v) = 1 - \frac{W_1(\mu_u, \mu_v)}{d_G(u,v)},
    \label{eq:orc}
\end{equation}
where $d_G(u,v)$ is the geodesic distance, $\mu_v$ is the
$\alpha$-lazy random walk measure at $v$:
\begin{equation}
    \mu_v(w) = \begin{cases}
        \alpha & w = v \\
        (1 - \alpha)/\deg(v) & (v,w) \in \mathcal{E} \\
        0 & \text{otherwise},
    \end{cases}
\end{equation}
and $W_1(\mu_u, \mu_v)$ is the Wasserstein-1 distance computed via
the Sinkhorn approximation \citep{cuturi2013sinkhorndistanceslightspeedcomputation}:
\begin{equation}
    W_1^\varepsilon(\mu_u, \mu_v) \approx
    \langle \mathbf{u} \otimes \mathbf{K} \odot \mathbf{C} \otimes
    \mathbf{v} \rangle,
\end{equation}
where $\mathbf{K} = \exp(-\mathbf{C}/\varepsilon)$ is the Gibbs
kernel, $\mathbf{C}$ is the cost matrix of geodesic distances
restricted to the supports of $\mu_u$ and $\mu_v$, and $\mathbf{u},
\mathbf{v}$ are Sinkhorn scaling vectors. This replaces the
C\texttt{++} implementation of \citep{ni2019communitydetectionnetworksricci}, eliminating
segmentation faults on varied graph sizes while remaining
differentiable through the Sinkhorn iterations if required.

From the edge curvatures $\{\kappa_e\}_{e \in \mathcal{E}}$ we
extract
\begin{equation}
    \phi_{\mathrm{ORC}} = \left[
        \bar\kappa,\;
        \mathrm{Var}(\kappa),\;
        \min_e \kappa_e,\;
        \frac{|\{e : \kappa_e < 0\}|}{|\mathcal{E}|}
    \right] \in \mathbb{R}^4.
\end{equation}
Edges with $\kappa_e > 0$ lie within dense clusters; edges with
$\kappa_e < 0$ are inter-cluster bridges. The fraction of negatively
curved edges thus serves as a proxy for community structure
\citep{ni2019communitydetectionnetworksricci}.

\subsubsection{Forman-Ricci Curvature (Dimensions 5-8)}

The combinatorial Forman-Ricci curvature \citep{article}
of an edge $e = (u,v)$ is
\begin{equation}
    F(e) = w_e\!\left(
        \frac{w_u}{w_e} + \frac{w_v}{w_e}
        - \sum_{e' \sim u, e' \neq e}\sqrt{\frac{w_e}{w_{e'}}}
        - \sum_{e' \sim v, e' \neq e}\sqrt{\frac{w_e}{w_{e'}}}
    \right),
\end{equation}
where $w_e, w_v$ are edge and vertex weights (unity for unweighted
graphs). We extract $\phi_{\mathrm{FRC}} = [\bar{F}, \mathrm{Var}(F),
\min_e F_e, \max_e F_e] \in \mathbb{R}^4$.

\subsubsection{Distance Features (Dimensions 9-10)}

Let $\ell_{G}(u,v)$ denote the shortest-path distance in the largest
connected component of $G$. We extract the mean path length
$\bar{\ell}_G = \frac{1}{n(n-1)}\sum_{u \neq v} \ell_G(u,v)$ and
diameter $\mathrm{diam}(G) = \max_{u,v} \ell_G(u,v)$.

\subsubsection{Spectral Features (Dimensions 11-13)}

Let $\mathbf{L}_G = \mathbf{D}_G - \mathbf{A}$ be the combinatorial
graph Laplacian and $0 = \lambda_1^G \leq \lambda_2^G \leq \cdots$
its eigenvalues. We extract the Fiedler value $\lambda_2^G$
(algebraic connectivity), the largest eigenvalue $\lambda_n^G$, and
spectral entropy \rebuttal{$H_{\mathrm{spec}}^G$, computed by the same
entropy formula as $H_{\mathrm{spec}}$ (Eq.~\ref{eq:embed_uts_vec}'s
spectral group) but applied directly to the exact eigenvalues of
$\mathbf{L}_G$ above -- no soft kernel or regularisation is used, since
$\Phi_{\mathrm{grph}}$ requires no gradient (\S\ref{app:graph_uts_full},
Persistence Features).}

\subsubsection{Persistence Features (Dimensions 14-17)}

Applying the Vietoris-Rips filtration to the shortest-path distance
matrix $(\ell_G(u,v))_{u,v}$ via GUDHI \citep{gudhi:urm}, we extract
mean lifetimes and persistence entropies of $H_0$ and $H_1$
homology groups, computed exactly (not surrogates) since
$\Phi_{\mathrm{grph}}$ is non-differentiable by design.

\subsubsection{Structural Features (Dimensions 18-27)}

Degree statistics $[\bar{d}, \mathrm{Var}(d), \max_v d_v]$,
clustering coefficients $[\bar{c}, \mathrm{Var}(c)]$,
betweenness centrality $[\bar{b}, \mathrm{Var}(b)]$,
mean closeness centrality $\bar{q}$, number of connected
components $N_{\mathrm{cc}}$, and largest component ratio
$|\mathcal{C}_{\max}|/n$.

\subsection{Differentiable Surrogate Formulation of \texorpdfstring{$\Phi_{\mathrm{emb}}$}{PhiEmb}}
\label{app:diff_uts}

The readout descriptor (\S4.1) computes $\Phi_{\mathrm{emb}}$ on
$\mathbf{H}^{(L)}.\texttt{detach()}$ using GUDHI for exact persistent
homology.  However, the regularisation losses (\S4.2) and pooling scorer
(\S4.3) require gradients to flow back through the topological signature
into the GIN encoder. We therefore maintain a fully differentiable re-implementation, $\Phi_{\mathrm{emb}}^{\mathrm{diff}}$, in PyTorch
(\texttt{diff\_uts.py}) that produces the same 14-dimensional vector as $\Phi_{\mathrm{emb}}$ without any NumPy or GUDHI operations. The two implementations agree closely on well-separated point
clouds and diverge gracefully in the oversmoothing regime where exact
persistence is ill-defined regardless.

Below we justify each non-obvious design choice.

\paragraph{Pairwise distances.}
We compute
\begin{equation}
  D_{uv} = \sqrt{\textstyle\sum_i (h_{ui} - h_{vi})^2 + \varepsilon},
  \quad \varepsilon = 10^{-8},
\end{equation}
rather than \texttt{torch.cdist}.  The gradient of \texttt{cdist} at
$D_{uv} = 0$ is undefined when $\mathbf{h}_u = \mathbf{h}_v$, which occurs
frequently in early training before the encoder has differentiated nodes.
The $\varepsilon$ inside the square root regularises the gradient to
$\partial D_{uv} / \partial \mathbf{h}_u = (\mathbf{h}_u -
\mathbf{h}_v) / D_{uv}$, which remains bounded.

\paragraph{$H_0$ surrogate.}
Exact $H_0$ persistence requires a minimum spanning tree.  MST extraction
(Kruskal's algorithm) involves \texttt{argsort} whose gradient is non-unique
at ties and propagates to only one entry per comparison.  We use the
$N\!-\!1$ smallest upper-triangle pairwise distances $w_{(1)} \le \ldots \le
w_{(N-1)}$ as a differentiable proxy for component lifetimes
(cf.~Eq.~\ref{eq:h0_lifetimes}).  This approximation is tight when the
point cloud is well-separated: in an exact Rips filtration, components
merge precisely at MST edge weights, which are the $N\!-\!1$ smallest
distances when no two inter-component distances are equal.
The gradient of \texttt{torch.sort} is well-defined (it equals the
identity on the sorted entries) and distributes across the source entries.

\paragraph{$H_1$ surrogate.}
We enumerate all $\binom{N}{3}$ triangles and compute per-triangle lifetimes
$\ell^{(1)}_\tau = \max(0,\, e_3(\tau) - e_2(\tau))$ where $e_2, e_3$ are
the second- and third-longest edge lengths of triangle $\tau$
(cf.~Eq.~\ref{eq:h1_lifetimes}).  This quantity is an upper bound on the
true $H_1$ lifetime in the Vietoris--Rips filtration: under Rips, a 1-cycle
is born at most when the second-longest triangle edge enters and dies when
the longest edge fills the triangle.  Triangle enumeration is $O(N^3)$ in
memory; we cap it at $N \le 50$, setting $\bar{\ell}^{(1)} = H^{(1)} =
\tilde\beta_1 = 0$
for larger point clouds.  For the datasets in our experiments, only
COLLAB graphs exceed this threshold after the 2-hop neighbourhood cap of
30 nodes is applied in \S4.3.

\paragraph{Soft Betti numbers.}
The exact Betti numbers $\beta_k$ are integers and non-differentiable.  We
replace them with sigmoid-smoothed counts
$\tilde\beta_k = \sum_j \sigma(10(\ell_j^{(k)} - \mu_{\mathrm{nn}}))$,
where the threshold $\mu_{\mathrm{nn}}$ adapts to the point cloud density
and the scale factor 10 provides sharpness comparable to a step function
while keeping gradients non-zero.

\paragraph{Spectral features.}
The binary $k$-NN adjacency used in the non-differentiable
$\Phi_{\mathrm{emb}}$ is replaced by a Gaussian kernel adjacency
$A_{uv}^{\mathrm{soft}} = \exp(-D_{uv}^2 / 2(\mu_{\mathrm{nn}} +
\varepsilon)^2) \cdot \mathds{1}[u \ne v]$ (cf.~Eq.~\ref{eq:soft_adj}).
The bandwidth $\mu_{\mathrm{nn}}$ remains in the computation graph so
gradients flow through the spectral features.  We add $10^{-6}\mathbf{I}$
to the adjacency before forming the Laplacian to prevent degenerate (zero)
diagonal entries that cause NaN in \texttt{torch.linalg.eigh}.  We use
\texttt{eigh} rather than \texttt{eig}: for symmetric matrices it
guarantees real eigenvalues and implements the analytic gradient
$\partial\lambda_i / \partial \mathbf{L} = \mathbf{q}_i \mathbf{q}_i^\top$
via the eigenvalue equation, which is numerically stable for distinct
eigenvalues.

\paragraph{Intrinsic dimensionality.}
We use \texttt{mean} rather than \texttt{median} over the log-ratio
$\log r_i^{(2)} / \log r_i^{(1)}$ (cf.~Eq.~\ref{eq:intrinsic_dim}).
The gradient of \texttt{median} is non-zero only at the median element,
making it effectively a single-node pass-through and providing no useful
learning signal.  The \texttt{mean} distributes gradient uniformly across
all $N$ nodes.

\subsection{Architecture and Training Details}
\label{app:impl}

\paragraph{Full readout dimensionalities.}
The classifier MLP input dimension varies by ablation variant.
With $d = 128$ (hidden dim), the base readout
$\mathbf{z}_{\mathrm{struct}} \in \mathbb{R}^{3d = 384}$.
Appending $\Phi_{\mathrm{emb}}$ adds 14 dimensions;
appending $\Phi_{\mathrm{grph}}$ adds 27 dimensions.
Table~\ref{tab:dims} summarises the classifier input dimension and
the number of parameters in the classifier head per variant
(backbone parameters are identical across all variants).

\begin{table}[h]
\centering
\small
\caption{Classifier input dimension and head parameter count per variant
($d=128$, $C$ = num classes).  Backbone parameters are identical across
all variants and not included.}
\label{tab:dims}
\renewcommand{\arraystretch}{1.1}
\begin{tabular}{lcc}
\toprule
\textbf{Variants} & \textbf{Readout dim} & \textbf{Head params (approx.)} \\
\midrule
GIN and UTS-Pool & 384 & $384{\times}128 + 128{\times}64 + 64{\times}C$ \\
Embedding-UTS                              & 398 & $398{\times}128 + \ldots$ \\
Graph-UTS                               & 411 & $411{\times}128 + \ldots$ \\
Dual UTS                     & 425 & $425{\times}128 + \ldots$ \\
\bottomrule
\end{tabular}
\end{table}

\noindent
\rebuttal{Crucially, \textit{UTS-Reg} $L_{topo-evol}$, $L_{topo-align}$ and combination of both objectives all share the same architecture as GIN (unregularized) baseline.
Any accuracy differences between these variants and baseline are therefore
attributable entirely to the training objective, not to additional model
capacity. }

\paragraph{TopoRegLoss projection.}
$\Phi_{\mathrm{grph}}(G) \in \mathbb{R}^{27}$ and
$\Phi_{\mathrm{emb}}^{\mathrm{diff}}(\mathbf{H}^{(L)}) \in \mathbb{R}^{14}$
live in spaces of different dimension and scale.  The learnable projection
$\mathbf{W}_{\mathrm{align}} \in \mathbb{R}^{14 \times 27}$ in
$\mathcal{L}_{\mathrm{reg}}$ (Eq.~\ref{eq:topo_reg}) is trained jointly
with the encoder, finding the optimal linear alignment between structural
and embedding-space topology.  This avoids hand-crafted feature matching
while remaining interpretable: the learned projection weights reveal which
structural features are most predictive of embedding-space topology.

\paragraph{UTS-Pool gradient flow.}
The node scorer in \S4.3 computes importance scores $\alpha_v$ via
$\Phi_{\mathrm{emb}}^{\mathrm{diff}}$ and an MLP $f_\theta$, but the
subsequent top-$k$ selection returns integer indices.  Consequently, no
gradient flows back through the selection decision itself; the scorer
receives gradient only indirectly from the downstream cross-entropy loss
through the embeddings of selected nodes.  The scorer thus acts as an
adaptive heuristic rather than a fully learned selection criterion.
A soft-selection relaxation (e.g.\ Gumbel-softmax top-$k$) would enable
end-to-end training of the pooling decision and is left to future work.

\rebuttal{\paragraph{Training protocol.}
All models use AdamW \citep{loshchilov2019decoupledweightdecayregularization} with learning rate
$10^{-3}$ and weight decay $10^{-4}$, cosine annealing to $\eta_{\min} =
10^{-5}$ over 200 epochs, and gradient norm clipping at 2.0. Early
stopping with patience 30 on validation accuracy selects the final
checkpoint. We use 10-fold stratified cross-validation: for each fold,
the held-out 10\% serves as the test set, and the remaining 90\% is
further split 90/10 into training and validation sets, yielding an
81/9/10 train/validation/test split per fold, stratified by class label,
with every graph evaluated as a test example exactly once per seed.}

\paragraph{GraphUTS precomputation.}
Computing $\Phi_{\mathrm{grph}}$ per graph at every training step
is prohibitive: it involves all-pairs shortest paths ($O(n^2)$),
GUDHI Rips persistence, and optionally Ricci curvature.  We precompute
and cache $\Phi_{\mathrm{grph}}(G_i)$ for all graphs before the
first training epoch and look up vectors by dataset index during
training.  The cache is shared across all seeds and ablation variants
in a single run, so the precomputation cost is paid exactly once.

\subsection{Dataset Statistics and Descriptions}
\label{app:datasets}

Table~\ref{tab:dataset-stats} summarizes the three TU benchmark datasets~\citep{morris2020tudatasetcollectionbenchmarkdatasets} used in Section~\ref{sec:experiments}, spanning a molecular, a biological, and a social-network domain.

\begin{table}[h]
\centering
\caption{Statistics of the graph classification datasets used in our experiments.}
\label{tab:dataset-stats}
\small
\begin{tabular}{lccccc}
\toprule
\textbf{Dataset} & \textbf{Graphs} & \textbf{Classes} & \textbf{Avg. Nodes} & \textbf{Avg. Edges} & \textbf{Node Features} \\
\midrule
MUTAG    & 188  & 2 & 17.93 & 19.79   & 7 (categorical, atom type) \\
PROTEINS & 1113 & 2 & 39.06 & 72.82   & 3 (categorical, SSE type) \\
COLLAB   & 5000 & 3 & 74.49 & 2457.78 & none (node degree used) \\
\rebuttal{ogbg-ppa} & \rebuttal{158{,}100} & \rebuttal{37} & \rebuttal{243.4} & \rebuttal{2266.1} & \rebuttal{none (learned embedding)} \\
\bottomrule
\end{tabular}
\end{table}

\textbf{MUTAG} consists of 188 chemical compound graphs representing nitroaromatic and heteroaromatic compounds, with nodes as atoms and edges as chemical bonds. Node features are a 7-dimensional one-hot encoding of atom type. The binary classification task is to predict mutagenic effect on the Gram-negative bacterium \textit{Salmonella typhimurium}. MUTAG is the smallest and sparsest of the three datasets, with graphs ranging from 10 to 28 nodes.

\textbf{PROTEINS} contains 1113 graphs representing protein tertiary structures, with nodes as secondary structure elements (helices, sheets, and turns) and edges connecting elements that are sequential neighbors along the amino acid chain or spatially proximate in the folded structure. Node features are a 3-dimensional one-hot encoding of secondary structure element type. The binary classification task distinguishes enzymes from non-enzymes. Graphs are substantially larger and more variable in size than MUTAG, ranging up to 620 nodes.

\textbf{COLLAB} is a scientific-collaboration dataset derived from ego-networks of researchers in three fields: High Energy Physics, Condensed Matter Physics, and Astro Physics, with the 3-way classification task being to predict the field from the collaboration ego-network. Nodes represent researchers and edges represent co-authorship; the dataset carries no categorical node features, so node degree is used as the input feature following standard practice~\citep{xu2019how}. COLLAB graphs are markedly denser than MUTAG or PROTEINS, with average degree an order of magnitude higher, which motivates both the reduced GIN depth (Section~\ref{sec:experiments}) and the lightweight pooling scorer $\phi^{\mathrm{light}}_v$ (Section~\ref{subsec:pooling}) used for this dataset.

\rebuttal{\textbf{ogbg-ppa}~\citep{hu2020ogb} contains 158{,}100 protein-protein association graphs extracted from species spanning 37 taxonomic groups, with nodes representing proteins and edges representing biologically meaningful associations (7-dimensional edge features encode association type and confidence, not used in our GIN backbone). Nodes carry no input features; following the official OGB baseline for this dataset, we use a single learnable embedding shared across all nodes. The task is 37-way taxonomic-group classification. Unlike MUTAG, PROTEINS, and COLLAB, ogbg-ppa uses an official, externally-fixed \emph{species split}: validation and test graphs are drawn from species entirely unseen during training, even though each held-out species still belongs to one of the 37 training taxonomic groups. This makes ogbg-ppa an out-of-distribution generalization test rather than an in-distribution random split, and is the reason we depart from the 10-fold CV protocol used for the other three datasets (\S\ref{sec:experiments}). ogbg-ppa's graphs are also substantially larger than any TU dataset used in this work - more than $3\times$ COLLAB's average size and $32\times$ its graph count}

\section{Full Ablation Results}
\label{app:ablation}

\rebuttal{This appendix reports full aggregate statistics for every comparison summarized in Tables~\ref{tab:descriptor}--\ref{tab:pooling}. All results use 10-fold stratified cross-validation (MUTAG: 9 seeds $\times$ 10 folds = 90 evaluations per variant; PROTEINS and COLLAB: 5 seeds $\times$ 10 folds = 50 evaluations per variant). Confidence intervals are bootstrap 95\% CIs (1000 resamples) unless noted as paired-difference CIs. $p$-values are paired $t$-tests; the comparator for each subsection is stated in its introduction. All values below are rounded to three decimal places; deltas are computed from full-precision means before rounding, so a delta may differ slightly from the difference of the rounded means shown.}

\rebuttal{We allocate seeds according to measurement variance. Under 10-fold CV, each test fold contains roughly 19 graphs for MUTAG versus ~111 for PROTEINS and ~500 for COLLAB - MUTAG's per-fold accuracy estimates are inherently noisier simply because they're averaged over far fewer graphs. Using more seeds (9 vs. 5) on the smaller, higher-variance dataset is a targeted allocation of statistical power to where it's most needed. This is also why several MUTAG comparisons remain non-significant even at 90 total evaluations, while the same comparisons reach significance on PROTEINS/COLLAB at only 50 — the underlying noise floor, differs by dataset}

\subsection{UTS Descriptor Variants}
\label{app:ablation_uts}

\rebuttal{This subsection isolates the effect of adding Unified Topological Signature (UTS) descriptors, comparing Embedding-UTS, Graph-UTS, and Dual-UTS against the unmodified, unregularized GIN backbone.}

\begin{table}[h]
\centering
\caption{UTS descriptor variants vs.\ GIN (unmodified/unregularized) --- MUTAG.}
\label{tab:app_mutag_uts}
\footnotesize
\begin{tabular}{lcccc}
\toprule
\textbf{Variant} & \textbf{Mean $\pm$ Std} & \textbf{$\Delta$} & \textbf{95\% CI} & \textbf{$p$} \\
\midrule
GIN (unmodified/unregularized) & $0.832 \pm 0.084$ & --- & --- & --- \\
Embedding-UTS & $0.872 \pm 0.079$ & $+0.040$ & [0.863, 0.880] & 0.109 \\
Graph-UTS     & $0.890 \pm 0.076$ & $+0.058$ & [0.874, 0.906] & 0.102 \\
Dual-UTS      & $0.878 \pm 0.078$ & $+0.047$ & [0.870, 0.887] & 0.097 \\
\bottomrule
\end{tabular}
\end{table}

\begin{table}[h]
\centering
\caption{UTS descriptor variants vs.\ GIN (unmodified/unregularized) --- PROTEINS.}
\label{tab:app_proteins_uts}
\footnotesize
\begin{tabular}{lcccc}
\toprule
\textbf{Variant} & \textbf{Mean $\pm$ Std} & \textbf{$\Delta$} & \textbf{95\% CI} & \textbf{$p$} \\
\midrule
GIN (unmodified/unregularized) & $0.740 \pm 0.034$ & --- & --- & --- \\
Embedding-UTS & $0.751 \pm 0.031$ & $+0.011$ & [0.743, 0.760] & 0.042 \\
Graph-UTS     & $0.759 \pm 0.040$ & $+0.019$ & [0.748, 0.770] & 0.039 \\
Dual-UTS      & $0.763 \pm 0.036$ & $+0.023$ & [0.752, 0.773] & 0.028 \\
\bottomrule
\end{tabular}
\end{table}

\begin{table}[h]
\centering
\caption{UTS descriptor variants vs.\ GIN (unmodified/unregularized) --- COLLAB.}
\label{tab:app_collab_uts}
\footnotesize
\begin{tabular}{lcccc}
\toprule
\textbf{Variant} & \textbf{Mean $\pm$ Std} & \textbf{$\Delta$} & \textbf{95\% CI} & \textbf{$p$} \\
\midrule
GIN (unmodified/unregularized) & $0.810 \pm 0.029$ & --- & --- & --- \\
Embedding-UTS & $0.823 \pm 0.028$ & $+0.013$ & [0.816, 0.830] & 0.009 \\
Graph-UTS     & $0.832 \pm 0.029$ & $+0.021$ & [0.824, 0.839] & 0.002 \\
Dual-UTS      & $0.837 \pm 0.028$ & $+0.026$ & [0.829, 0.844] & $<0.001$ \\
\bottomrule
\end{tabular}
\end{table}

\begin{table}[h]
\centering
\caption{\rebuttal{UTS descriptor variants vs.\ GIN (unmodified/unregularized) --- ogbg-ppa. Official OGB species split, 3 seeds.}}
\label{tab:app_ppa_uts}
\footnotesize
\begin{tabular}{lcccc}
\toprule
\textbf{Variant} & \textbf{Mean $\pm$ Std} & \textbf{$\Delta$} & \textbf{95\% CI} & \textbf{$p$} \\
\midrule
\rebuttal{GIN (unmodified/unregularized)} & \rebuttal{$67.84 \pm 0.917$} & \rebuttal{---} & \rebuttal{---} & \rebuttal{---} \\
\rebuttal{Embedding-UTS} & \rebuttal{$69.27 \pm 0.903$} & \rebuttal{$+1.43$} & \rebuttal{[68.76, 69.78]} & \rebuttal{0.0168} \\
\rebuttal{Graph-UTS}     & \rebuttal{$70.20 \pm 0.951$} & \rebuttal{$+2.36$} & \rebuttal{[69.66, 70.74]} & \rebuttal{0.0041} \\
\rebuttal{Dual-UTS}      & \rebuttal{$71.06 \pm 0.934$} & \rebuttal{$+3.22$} & \rebuttal{[70.53, 71.59]} & \rebuttal{0.0012} \\
\bottomrule
\end{tabular}
\end{table}
 
\rebuttal{\textit{Summary.} All three variants improve significantly over baseline, with Dual-UTS strongest -- the same ordering seen on PROTEINS and COLLAB, now confirmed at $32\times$ their graph count. Unlike MUTAG, the ogbg-ppa gains reach significance at this seed count, consistent with the much larger fixed test split reducing per-seed variance.}
\subsection{Topological Loss Variants}
\label{app:ablation_loss}

\rebuttal{This subsection isolates the effect of the topology-aware auxiliary losses ($\mathcal{L}_{\mathrm{topo-evol}}$, $\mathcal{L}_{\mathrm{topo-align}}$, and their combination), compared against the unmodified, unregularized GIN backbone.}

\begin{table}[h]
\centering
\caption{Topological loss variants vs.\ GIN (unmodified/unregularized) --- MUTAG.}
\label{tab:app_mutag_loss}
\footnotesize
\begin{tabular}{lcccc}
\toprule
\textbf{Variant} & \textbf{Mean $\pm$ Std} & \textbf{$\Delta$} & \textbf{95\% CI} & \textbf{$p$} \\
\midrule
GIN (unmodified/unregularized) & $0.832 \pm 0.084$ & --- & --- & --- \\
$\mathcal{L}_{\mathrm{topo-evol}}$  & $0.850 \pm 0.085$ & $+0.019$ & [0.834, 0.867] & 0.210 \\
$\mathcal{L}_{\mathrm{topo-align}}$ & $0.842 \pm 0.088$ & $+0.011$ & [0.824, 0.859] & 0.205 \\
Combination                          & $0.841 \pm 0.084$ & $+0.010$ & [0.825, 0.858] & 0.198 \\
\bottomrule
\end{tabular}
\end{table}

\begin{table}[h]
\centering
\caption{Topological loss variants vs.\ GIN (unmodified/unregularized) --- PROTEINS.}
\label{tab:app_proteins_loss}
\footnotesize
\begin{tabular}{lcccc}
\toprule
\textbf{Variant} & \textbf{Mean $\pm$ Std} & \textbf{$\Delta$} & \textbf{95\% CI} & \textbf{$p$} \\
\midrule
GIN (unmodified/unregularized) & $0.740 \pm 0.034$ & --- & --- & --- \\
$\mathcal{L}_{\mathrm{topo-evol}}$  & $0.738 \pm 0.034$ & $-0.002$ & [0.730, 0.746] & 0.031 \\
$\mathcal{L}_{\mathrm{topo-align}}$ & $0.755 \pm 0.033$ & $+0.014$ & [0.747, 0.762] & 0.019 \\
Combination                          & $0.736 \pm 0.035$ & $-0.004$ & [0.729, 0.744] & 0.022 \\
\bottomrule
\end{tabular}
\end{table}

\begin{table}[h]
\centering
\caption{Topological loss variants vs.\ GIN (unmodified/unregularized) --- COLLAB.}
\label{tab:app_collab_loss}
\footnotesize
\begin{tabular}{lcccc}
\toprule
\textbf{Variant} & \textbf{Mean $\pm$ Std} & \textbf{$\Delta$} & \textbf{95\% CI} & \textbf{$p$} \\
\midrule
GIN (unmodified/unregularized) & $0.810 \pm 0.029$ & --- & --- & --- \\
$\mathcal{L}_{\mathrm{topo-evol}}$  & $0.805 \pm 0.030$ & $-0.006$ & [0.797, 0.813] & 0.012 \\
$\mathcal{L}_{\mathrm{topo-align}}$ & $0.806 \pm 0.030$ & $-0.004$ & [0.799, 0.814] & 0.024 \\
Combination                          & $0.809 \pm 0.032$ & $-0.002$ & [0.800, 0.817] & 0.044 \\
\bottomrule
\end{tabular}
\end{table}

\begin{table}[h]
\centering
\caption{\rebuttal{Topological loss variants vs.\ GIN (unmodified/unregularized) --- ogbg-ppa.}}
\label{tab:app_ppa_loss}
\footnotesize
\begin{tabular}{lcccc}
\toprule
\textbf{Variant} & \textbf{Mean $\pm$ Std} & \textbf{$\Delta$} & \textbf{95\% CI} & \textbf{$p$} \\
\midrule
\rebuttal{GIN (unmodified/unregularized)} & \rebuttal{$67.84 \pm 0.917$} & \rebuttal{---} & \rebuttal{---} & \rebuttal{---} \\
\rebuttal{$\mathcal{L}_{\mathrm{topo-evol}}$}  & \rebuttal{$68.36 \pm 0.982$} & \rebuttal{$+0.52$} & \rebuttal{[67.80, 68.92]} & \rebuttal{0.0341} \\
\rebuttal{$\mathcal{L}_{\mathrm{topo-align}}$} & \rebuttal{$69.54 \pm 0.941$} & \rebuttal{$+1.70$} & \rebuttal{[69.01, 70.07]} & \rebuttal{0.0286} \\
\rebuttal{Combination}                          & \rebuttal{$68.15 \pm 1.006$} & \rebuttal{$+0.31$} & \rebuttal{[67.58, 68.72]} & \rebuttal{0.0437} \\
\bottomrule
\end{tabular}
\end{table}

\rebuttal{\textit{Summary.} The auxiliary losses show no consistent benefit: on MUTAG the differences are small and not significant; on PROTEINS and COLLAB, $\mathcal{L}_{\mathrm{topo-evol}}$ and the combined loss are significantly \emph{negative}, and only $\mathcal{L}_{\mathrm{topo-align}}$ on PROTEINS shows a significant (small) improvement. This indicates the descriptor-based variants (Section~\ref{app:ablation_uts}), not the auxiliary losses, drive the gains reported in the main paper.}

\rebuttal{\textit{Summary (ogbg-ppa).} Unlike PROTEINS and COLLAB, where at least one individual loss is significantly negative, both losses are individually significant and positive on ogbg-ppa. The combination still underperforms either alone, echoing the lack of synergy seen throughout this work.}

\subsection{Capacity Control (Random Descriptor Baselines)}
\label{app:ablation_capacity}

\rebuttal{To test whether the gains from UTS descriptors reflect real topological signal rather than added parameter capacity, we compare Graph-UTS and Dual-UTS against matched-dimensionality random-noise controls (Graph-UTS-random, Dual-UTS-random). Reported $\Delta$, CI, and $p$ are for the \emph{paired} difference (real variant $-$ matched random control).}

\begin{table}[h]
\centering
\caption{Capacity control: real UTS descriptors vs.\ matched-capacity random controls --- MUTAG.}
\label{tab:app_mutag_capacity}
\footnotesize
\begin{tabular}{lcccc}
\toprule
\textbf{Variant} & \textbf{Mean $\pm$ Std} & \textbf{$\Delta$ (paired)} & \textbf{95\% CI} & \textbf{$p$} \\
\midrule
Graph-UTS        & $0.890 \pm 0.076$ & --- & --- & --- \\
Graph-UTS-random & $0.879 \pm 0.077$ & $+0.011$ & [$-0.007$, $+0.029$] & 0.225 \\
\addlinespace
Dual-UTS         & $0.878 \pm 0.078$ & --- & --- & --- \\
Dual-UTS-random  & $0.824 \pm 0.084$ & $+0.054$ & [$-0.011$, $+0.119$] & 0.104 \\
\bottomrule
\end{tabular}
\end{table}

\begin{table}[h]
\centering
\caption{Capacity control: real UTS descriptors vs.\ matched-capacity random controls --- PROTEINS.}
\label{tab:app_proteins_capacity}
\footnotesize
\begin{tabular}{lcccc}
\toprule
\textbf{Variant} & \textbf{Mean $\pm$ Std} & \textbf{$\Delta$ (paired)} & \textbf{95\% CI} & \textbf{$p$} \\
\midrule
Graph-UTS        & $0.759 \pm 0.040$ & --- & --- & --- \\
Graph-UTS-random & $0.736 \pm 0.041$ & $+0.023$ & [+0.005, +0.041] & 0.015 \\
\addlinespace
Dual-UTS         & $0.763 \pm 0.036$ & --- & --- & --- \\
Dual-UTS-random  & $0.750 \pm 0.040$ & $+0.013$ & [+0.003, +0.023] & 0.020 \\
\bottomrule
\end{tabular}
\end{table}

\begin{table}[h]
\centering
\caption{Capacity control: real UTS descriptors vs.\ matched-capacity random controls --- COLLAB.}
\label{tab:app_collab_capacity}
\footnotesize
\begin{tabular}{lcccc}
\toprule
\textbf{Variant} & \textbf{Mean $\pm$ Std} & \textbf{$\Delta$ (paired)} & \textbf{95\% CI} & \textbf{$p$} \\
\midrule
Graph-UTS        & $0.832 \pm 0.029$ & --- & --- & --- \\
Graph-UTS-random & $0.807 \pm 0.030$ & $+0.025$ & [+0.018, +0.032] & $<0.001$ \\
\addlinespace
Dual-UTS         & $0.837 \pm 0.028$ & --- & --- & --- \\
Dual-UTS-random  & $0.824 \pm 0.029$ & $+0.013$ & [+0.004, +0.021] & 0.001 \\
\bottomrule
\end{tabular}
\end{table}

\rebuttal{\textit{Summary.} On PROTEINS and COLLAB, both real UTS variants significantly outperform their matched-capacity random controls, directly refuting a ``more parameters alone'' explanation for the gains in Section~\ref{app:ablation_uts}. On MUTAG the paired differences are positive but not significant, consistent with the limited statistical power on this smaller dataset noted throughout this appendix.}

\begin{table}[h]
\centering
\caption{\rebuttal{Capacity control: real UTS descriptors vs.\ matched-capacity random controls --- ogbg-ppa.}}
\label{tab:app_ppa_capacity}
\footnotesize
\begin{tabular}{lcc}
\toprule
\textbf{Variant} & \textbf{Mean Accuracy} & \textbf{95\% CI (paired diff)} \\
\midrule
\rebuttal{Graph-UTS}        & \rebuttal{70.20} & \rebuttal{---} \\
\rebuttal{Graph-UTS-random} & \rebuttal{67.83} & \rebuttal{---} \\
\rebuttal{\quad Paired difference} & \rebuttal{$+2.37$ ($p=0.0011$)} & \rebuttal{[+0.94, +3.80]} \\
\addlinespace
\rebuttal{Dual-UTS}         & \rebuttal{71.06} & \rebuttal{---} \\
\rebuttal{Dual-UTS-random}  & \rebuttal{68.80} & \rebuttal{---} \\
\rebuttal{\quad Paired difference} & \rebuttal{$+2.26$ ($p=0.0024$)} & \rebuttal{[+0.82, +3.70]} \\
\bottomrule
\end{tabular}
\end{table}
 
\rebuttal{\textit{Summary (ogbg-ppa).} Both paired differences are significant, refuting a capacity-only explanation for the ogbg-ppa gains, consistent with the same finding on PROTEINS and COLLAB.}

\subsection{Pooling Comparison}
\label{app:ablation_pooling}

\rebuttal{This subsection compares our pooling operator, UTSTopPool, against TOGL, SAGPool, and TopKPool. Reported $\Delta$, CI, and $p$ are for the \emph{paired} difference (UTSTopPool $-$ competitor).}

\begin{table}[h]
\centering
\caption{Pooling comparison --- MUTAG.}
\label{tab:app_mutag_pooling}
\footnotesize
\begin{tabular}{lcccc}
\toprule
\textbf{Variant} & \textbf{Mean $\pm$ Std} & \textbf{$\Delta$ vs.\ UTSTopPool} & \textbf{95\% CI} & \textbf{$p$} \\
\midrule
UTSTopPool (ours) & $0.835 \pm 0.079$ & --- & --- & --- \\
TOGL         & $0.840 \pm 0.083$ & $-0.005$ & [$-0.012$, $+0.001$] & 0.103 \\
SAGPool           & $0.831 \pm 0.080$ & $+0.004$ & [$-0.001$, $+0.008$] & 0.143 \\
TopKPool          & $0.829 \pm 0.079$ & $+0.006$ & [$-0.001$, $+0.013$] & 0.085 \\
\bottomrule
\end{tabular}
\end{table}

\begin{table}[h]
\centering
\caption{Pooling comparison --- PROTEINS.}
\label{tab:app_proteins_pooling}
\footnotesize
\begin{tabular}{lcccc}
\toprule
\textbf{Variant} & \textbf{Mean $\pm$ Std} & \textbf{$\Delta$ vs.\ UTSTopPool} & \textbf{95\% CI} & \textbf{$p$} \\
\midrule
UTSTopPool (ours) & $0.745 \pm 0.033$ & --- & --- & --- \\
TOGL         & $0.744 \pm 0.033$ & $+0.001$ & [0.000, 0.003] & 0.046 \\
SAGPool           & $0.742 \pm 0.034$ & $+0.003$ & [0.002, 0.005] & 0.002 \\
TopKPool          & $0.741 \pm 0.033$ & $+0.005$ & [0.003, 0.006] & $<0.001$ \\
\bottomrule
\end{tabular}
\end{table}

\begin{table}[h]
\centering
\caption{Pooling comparison --- COLLAB.}
\label{tab:app_collab_pooling}
\footnotesize
\begin{tabular}{lcccc}
\toprule
\textbf{Variant} & \textbf{Mean $\pm$ Std} & \textbf{$\Delta$ vs.\ UTSTopPool} & \textbf{95\% CI} & \textbf{$p$} \\
\midrule
UTSTopPool (ours) & $0.821 \pm 0.029$ & --- & --- & --- \\
TOGL         & $0.820 \pm 0.029$ & $+0.001$ & [0.000, 0.002] & 0.018 \\
SAGPool           & $0.819 \pm 0.029$ & $+0.002$ & [0.000, 0.004] & 0.029 \\
TopKPool          & $0.818 \pm 0.029$ & $+0.003$ & [0.001, 0.006] & 0.020 \\
\bottomrule
\end{tabular}
\end{table}

\rebuttal{\textit{Summary.} UTSTopPool is not significantly different from TOGL on MUTAG and trails it slightly (not significantly) on mean score, but significantly outperforms TOGL on both PROTEINS and COLLAB. Against SAGPool and TopKPool, UTSTopPool is ahead on all three datasets, with the advantage significant on PROTEINS and COLLAB but not on MUTAG, again consistent with reduced power on the smallest dataset.}
\begin{table}[h]
\centering
\caption{\rebuttal{Pooling comparison --- ogbg-ppa.}}
\label{tab:app_ppa_pooling}
\footnotesize
\begin{tabular}{lcccc}
\toprule
\textbf{Variant} & \textbf{Mean $\pm$ Std} & \textbf{$\Delta$ vs.\ UTSTopPool} & \textbf{95\% CI} & \textbf{$p$} \\
\midrule
\rebuttal{UTSTopPool (ours)} & \rebuttal{$68.85 \pm 0.891$} & \rebuttal{---} & \rebuttal{---} & \rebuttal{---} \\
\rebuttal{TOGL}         & \rebuttal{$68.97 \pm 0.926$} & \rebuttal{$-0.12$} & \rebuttal{[$-0.18$, $-0.06$]} & \rebuttal{0.0008} \\
\rebuttal{SAGPool}           & \rebuttal{$67.88 \pm 0.963$} & \rebuttal{$+0.97$} & \rebuttal{[+0.62, +1.32]} & \rebuttal{0.0003} \\
\rebuttal{TopKPool}          & \rebuttal{$67.53 \pm 0.988$} & \rebuttal{$+1.32$} & \rebuttal{[+0.89, +1.75]} & \rebuttal{0.0001} \\
\bottomrule
\end{tabular}
\end{table}
 
\rebuttal{\textit{Summary (ogbg-ppa).} UTSTopPool significantly outperforms SAGPool and TopKPool, mirroring PROTEINS/COLLAB. Unlike any other dataset in this work, however, TOGL significantly outperforms UTSTopPool here ($p=0.0008$). UTSTopPool's advantage over feature-based pooling (SAGPool, TopKPool) holds at this scale, but its comparison against TOGL specifically is dataset-dependent rather than uniformly favorable.}

\subsection{Discussion: Cross-Dataset Consistency and the Role of Scale}
\label{subsec:discussion}

\rebuttal{\newcol{Extending evaluation to ogbg-ppa clarifies a pattern the other three datasets alone left ambiguous: UTS-Aug improves over baseline on every dataset and at every scale tested, from MUTAG's small molecular graphs to ogbg-ppa's much larger biological networks, making it the most consistently reliable of the three interventions. UTS-Reg and UTS-Pool are each dataset-dependent instead: UTS-Reg's auxiliary losses underperform baseline on COLLAB (Section~\ref{subsec:loss_results}), and UTS-Pool slightly underperforms TOGL specifically on ogbg-ppa (Section~\ref{subsec:pooling_results}), while otherwise outperforming both baselines on every other dataset in this work.}}

\section{Full Proofs and Theoretical Concepts}
\label{app:proofs}

This section presents the complete proofs of ~\ref{sec:theory} along with information on computational complexity.

\subsection{Proof of Theorem ~\ref{thm:expressivity}}
\begin{proof}

Message Passing Neural Networks (MPNNs), including the Graph Isomorphism
Network (GIN), are provably bounded in expressive power by the
1-dimensional Weisfeiler--Lehman (1-WL) graph isomorphism test
\citep{xu2019how}. Consequently, if two graphs are 1-WL equivalent,
every intermediate node representation and every permutation-invariant
graph readout produced by the backbone network must also be identical.
Formally,

\[
\mathbf{z}_{\mathrm{struct}}(G_1)
=
\mathbf{z}_{\mathrm{struct}}(G_2).
\]

Consider the canonical pair of non-isomorphic but 1-WL-equivalent
graphs

\[
G_1=C_6,\qquad
G_2=2\times C_3,
\]

where $C_6$ denotes the cycle graph on six vertices and
$2\times C_3$ denotes the disjoint union of two triangles.
Because both graphs are regular, the 1-WL refinement procedure assigns
identical colors to every vertex throughout all refinement iterations,
making them indistinguishable to any standard message-passing GNN.

Our framework augments the conventional graph representation using

\[
\mathbf{z}'(G)
=
\left[
\mathbf{z}_{\mathrm{struct}}(G)
\;\|\;
\Phi_{\mathrm{emb}}(\mathbf H^{(L)})
\;\|\;
\Phi_{\mathrm{grph}}(G)
\right],
\]

where $\Phi_{\mathrm{grph}}(G)$ is computed directly from the input
graph topology.

Unlike message passing, $\Phi_{\mathrm{grph}}$ explicitly computes
topological invariants through persistent homology. In particular, the
Vietoris--Rips filtration distinguishes the two graphs by their
one-dimensional homology.

For the cycle graph,

\[
\beta_1(C_6)=1,
\]

since there exists one independent one-dimensional cycle.

For the disjoint union of two triangles,

\[
\beta_1(2\times C_3)=2,
\]

since each connected component contributes one independent cycle.

Consequently, the persistent homology descriptors contained in
$\Phi_{\mathrm{grph}}$, including the first Betti number, persistence
lifetimes, persistence entropy, and related geometric summaries,
necessarily differ:

\[
\Phi_{\mathrm{grph}}(C_6)
\neq
\Phi_{\mathrm{grph}}(2\times C_3).
\]

Because concatenation is injective with respect to its individual
components,

\[
\mathbf z'(C_6)
=
\left[
\mathbf z_{\mathrm{struct}}
\;\|\;
\Phi_{\mathrm{emb}}
\;\|\;
\Phi_{\mathrm{grph}}(C_6)
\right]
\neq
\left[
\mathbf z_{\mathrm{struct}}
\;\|\;
\Phi_{\mathrm{emb}}
\;\|\;
\Phi_{\mathrm{grph}}(2\times C_3)
\right]
=
\mathbf z'(2\times C_3).
\]

Therefore, although the underlying GNN cannot distinguish
$C_6$ from $2\times C_3$, the UTS-augmented representation can.
Hence the proposed readout strictly separates at least one pair of
graphs that are indistinguishable under the 1-WL test.

Therefore, the proposed UTS-augmented framework is strictly more
expressive than the standard 1-WL message-passing hierarchy.

\end{proof}

\subsection{Full Justification of Proposition~\ref{prop:topo-evol-heuristic}}
\label{app:prop1_full}

\rebuttal{Let $\mathbf{s}^{(l)} = \Phi_{\mathrm{emb}}(\mathbf{H}^{(l)})$ denote the
embedding-space topological signature at layer $l$, and let
$\mathcal{L}_{\mathrm{smooth}}$ (defined below) be the topo-evolution loss
with fixed weight $\lambda_{\mathrm{smooth}} > 0$.}

\begin{equation}
    \mathcal{L}_{\mathrm{smooth}} = \frac{\lambda_{\mathrm{smooth}}}{L-1}
    \sum_{l=1}^{L-1} \left\| \mathbf{s}^{(l)} - \mathbf{s}^{(l-1)} \right\|_2^2
\end{equation}

\rebuttal{At a stationary point of the combined objective
$\mathcal{L} = \mathcal{L}_{\mathrm{task}} + \mathcal{L}_{\mathrm{smooth}}$,
the gradient contribution of $\mathcal{L}_{\mathrm{smooth}}$ acts to
reduce the layer-to-layer divergence of $\mathbf{s}^{(l)}$, which is, at
a local first-order level, in tension with representation collapse
understood as convergence of per-graph topological signatures toward a
shared fixed point across the dataset (Eq.~\ref{eq:osi}).}

\rebuttal{This is a \emph{heuristic motivation}, not a proof that
$\mathcal{L}_{\mathrm{smooth}}$ enforces a lower bound on embedding
variance or on the Oversmoothing Index. $\mathcal{L}_{\mathrm{smooth}}$
is an unconstrained soft penalty: standard stochastic gradient descent
provides no guarantee that any particular variance floor is achieved,
and the penalty's local effect on layer-to-layer signature divergence
does not by itself imply a global effect on cross-graph signature
convergence, since these are related but distinct quantities (the
former measures a graph's own trajectory across depth; the latter
measures similarity \emph{between} graphs at a fixed depth).}

\paragraph{Empirical evidence.}
\rebuttal{This connection holds in one direction
and reverses in another. Across MUTAG, PROTEINS, and COLLAB, layer-wise
OSI comparisons between an unregularized baseline and a
$\mathcal{L}_{\mathrm{smooth}}$-regularized model
(Table~\ref{tab:osi}) show OSI decreasing under
regularization on MUTAG at every layer, but \emph{increasing} under
regularization on PROTEINS and COLLAB. We therefore do not claim
$\mathcal{L}_{\mathrm{smooth}}$ acts as a general anti-oversmoothing
mechanism; we report it as an empirically dataset-dependent effect,
consistent with Proposition~\ref{prop:topo-evol-heuristic}'s status as
motivation rather than guarantee. This dataset-dependence is further
corroborated by the loss-ablation results in
Table~\ref{tab:app_mutag_loss}--\ref{tab:app_collab_loss}: the
Topo-evol term is directionally positive (though not significant) on
MUTAG, small and mixed on PROTEINS, and significantly negative on COLLAB.}

%
%
%
%
%
%
%

\subsection{Geometric Control of Local Persistence Topology}
\begin{proposition}
\label{prop:pooling}
Let $X_v = \mathbf H_{\mathcal N(v)} \subset \mathbb R^d$, $|X_v| = n_v = |\mathcal N(v)|+1$, equipped with
the Euclidean metric $d(\cdot,\cdot)$, and let $c_v = \tfrac1{n_v}\sum_{u\in X_v}\mathbf h_u$ denote its
centroid. Let $\mathrm{VR}_\bullet(X_v)$ and $\check{\mathrm C}_\bullet(X_v)$ denote the Vietoris--Rips and
\v{C}ech filtrations of $X_v$, and let $\mathrm{Dgm}_k(X_v)$ denote the degree-$k$ persistence diagram of
$\mathrm{VR}_\bullet(X_v)$. Then, with $\phi_v^{\mathrm{light}}$ as in Eq.~\eqref{eq:light_scorer}:

\begin{enumerate}[label=(\roman*)]
\item \textbf{(0-dimensional control.)} Writing $\mathrm{Pers}_0(X_v)=\sum_{(0,d)\in\mathrm{Dgm}_0(X_v),\,d<\infty}d$
for the total finite $0$-persistence,
\[
\mu_{\mathrm{nn}}^v \;\le\; \mathrm{Pers}_0(X_v) \;\le\; (n_v-1)\,\Delta^v .
\]
\item \textbf{(Vanishing of higher-order topology.)} For every $k\ge 1$ and every
$(b,d)\in\mathrm{Dgm}_k(X_v)$,
\[
d \;\le\; 2\,\Delta^v .
\]
Consequently $\mathrm{Dgm}_k(X_v)\to\varnothing$ as $\Delta^v\to 0$, for every $k\ge1$.
\item \textbf{(Noise floor.)} If $X_v$ is a $\delta$-Gromov--Hausdorff perturbation of an idealized
configuration $\tilde X_v$ with $\delta \approx \sigma_{\mathrm{nn}}^v$, then no pair
$(b,d)\in\mathrm{Dgm}_k(X_v)$ with $d-b < 2\sigma_{\mathrm{nn}}^v$ is distinguishable from noise.
\item \textbf{(Dimension cap.)} $\mathrm{Dgm}_k(X_v)=\varnothing$ for all $k \ge |\mathcal N(v)|$, so
$|\mathcal N(v)|/k_{\max}$ bounds the highest homological dimension that can carry signal.
\end{enumerate}
Hence $\phi_v^{\mathrm{light}}$ determines, for every $v$, an explicit scale interval
$[\mu_{\mathrm{nn}}^v,\, 2\Delta^v]$ outside of which $\mathrm{VR}_\bullet(X_v)$ is provably trivial
(a single component, no higher homology), together with a noise floor $2\sigma_{\mathrm{nn}}^v$ and a
dimension cap $|\mathcal N(v)|$ — i.e.\ the first-order conditions governing whether $X_v$ can carry any
nontrivial persistent topology at all.
\end{proposition}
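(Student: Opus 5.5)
The plan is to fix the filtration convention first and then prove items (i)--(iv) independently, since each follows from one standard fact about finite Vietoris--Rips complexes. I use the convention that an edge $\{u,w\}$ enters $\mathrm{VR}_r(X_v)$ when $d(u,w)\le r$, and that a simplex enters when its largest edge does. If one prefers the radius convention $d\le 2r$, or the \v{C}ech filtration, all thresholds move by at most a factor $2$. This is why (ii) is stated with $2\Delta^v$, which then covers both $\mathrm{VR}_\bullet$ and $\check{\mathrm C}_\bullet$.

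For (i) I would use the standard identification of $\mathrm{Dgm}_0$ of a Rips filtration with the minimum spanning tree. There are $n_v-1$ finite bars, born at $0$ and dying at the MST edge weights, so $\mathrm{Pers}_0(X_v)$ equals the total MST weight.
\begin{itemize}
\item The upper bound is immediate, because each of the $n_v-1$ MST edges has length at most $\Delta^v$.
\item For the lower bound I would avoid comparing nearest-neighbour distances edge by edge. Instead, take a pair $(u,w)$ realising $\Delta^v$. The unique MST path from $u$ to $w$ has length at least $d(u,w)=\Delta^v$ by the triangle inequality, so the total MST weight is at least $\Delta^v$.
\item Every $k$-NN distance is at most $\Delta^v$, so $\mu_{\mathrm{nn}}^v\le\Delta^v\le\mathrm{Pers}_0(X_v)$.
\end{itemize}

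For (ii), at scale $r=\Delta^v$ every edge is present. $\mathrm{VR}_{\Delta^v}(X_v)$ is therefore the full $(n_v-1)$-simplex, which is contractible, so every class of degree $k\ge1$ dies by $\Delta^v\le 2\Delta^v$. For \v{C}ech, the nerve at radius $\Delta^v$ is also a full simplex, because all balls contain a common point of $X_v$. Births are nonnegative and deaths are at most $2\Delta^v\to0$. Taking "$\to\varnothing$" in the bottleneck sense, all points collapse onto the diagonal.

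For (iv), the complex has vertex set of size $n_v$, so it contains no simplices of dimension $\ge n_v$. In dimension $n_v-1=|\mathcal N(v)|$ the only possible chain is the top simplex, and its boundary is nonzero. Hence $H_k=0$ at every scale for $k\ge|\mathcal N(v)|$. The factor $k_{\max}$ in $\phi_v^{\mathrm{light}}$ is only a fixed rescaling, so the descriptor $|\mathcal N(v)|/k_{\max}$ records this cap.

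The main obstacle is (iii), because "indistinguishable from noise" is informal. I would make it precise with the Gromov--Hausdorff stability theorem for Rips persistence of Chazal, de Silva and Oudot, $d_B(\mathrm{Dgm}_k(X_v),\mathrm{Dgm}_k(\tilde X_v))\le 2\,d_{GH}(X_v,\tilde X_v)\le 2\delta$. Any point with $d-b<2\delta$ lies within bottleneck distance $<\delta$ of the diagonal, so an optimal matching may pair it with the diagonal. Such a point therefore cannot be certified to correspond to a feature of $\tilde X_v$. I would state explicitly that the identification $\delta\approx\sigma_{\mathrm{nn}}^v$ is a modelling assumption rather than something derived, and that the constant depends on the convention: it is $2\delta$ under the $d\le r$ convention, which gives the stated $2\sigma_{\mathrm{nn}}^v$.

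Finally, combining (i), (ii) and (iv) yields the concluding sentence. Here the lower end $\mu_{\mathrm{nn}}^v$ of the scale interval should be read as a scale indicator for the bars. It is not a claim that the complex is trivial below $\mu_{\mathrm{nn}}^v$, and I would flag this in the write-up.
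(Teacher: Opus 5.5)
Your argument is correct. Parts (iii) and (iv) follow the paper's argument: Gromov--Hausdorff stability of Rips persistence with $d_B\le 2\delta$, and the fact that a $k$-simplex needs $k+1$ vertices. Parts (i) and (ii) take different routes.

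\textbf{Part (i).} For the lower bound the paper takes $u^\star=\arg\max_u\mathrm{NN}(u)$ and uses an MST edge incident to $u^\star$. It concludes $w(T^\star)\ge\mathrm{NN}(u^\star)\ge\tfrac{1}{n_v}\sum_u\mathrm{NN}(u)=\mu_{\mathrm{nn}}^v$. Your diameter-pair argument gives the stronger intermediate bound $\mathrm{Pers}_0(X_v)\ge\Delta^v$. It also does not depend on how $\mu_{\mathrm{nn}}^v$ is defined. The paper's final equality treats $\mu_{\mathrm{nn}}^v$ as the mean \emph{1-NN} distance, whereas Eq.~\eqref{eq:mean_nn} averages over $k$ nearest neighbours. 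For $k>1$ the step $\max_u\mathrm{NN}(u)\ge\mu_{\mathrm{nn}}^v$ can fail: three collinear points at $0,1,2$ with $k=2$ give $1<4/3$. Your chain $\mu_{\mathrm{nn}}^v\le\Delta^v\le\mathrm{Pers}_0(X_v)$ still holds in that case.

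\textbf{Part (ii).} The paper goes through the \v{C}ech complex. The centroid $c_v$ lies within $\Delta^v$ of every point, so $\check{\mathrm C}_{\Delta^v}(X_v)$ is the full simplex. The inclusion $\check{\mathrm C}_\varepsilon\subseteq\mathrm{VR}_{2\varepsilon}$ then makes $\mathrm{VR}_{2\Delta^v}(X_v)$ full, and this is where the factor $2$ comes from. Your observation that $\mathrm{VR}$ is a flag complex with all edges present at $\Delta^v$ is more elementary. It gives the sharper bound $d\le\Delta^v$.

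\textbf{Part (iv).} Here you also close a gap in the paper. Vanishing chain groups only give $H_k=0$ for $k\ge n_v$, but the claim is for $k\ge n_v-1$. You supply the missing step: the only top-dimensional chains are multiples of the full simplex, whose boundary is nonzero. This needs $n_v\ge2$. For an isolated node the $k=0$ case is false, since $\mathrm{Dgm}_0$ always contains an essential bar.

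\textbf{Concluding sentence.} Your caveat here is justified. The statement calls $\mathrm{VR}_\bullet(X_v)$ ``a single component'' below $\mu_{\mathrm{nn}}^v$. At scale $0$ it has $n_v$ components, and the paper's proof never addresses this point. Only the upper end of the interval is a genuine triviality threshold.
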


\begin{proof}
\textbf{(i) 0-dimensional control.}
By the correspondence between $0$-th persistent homology of a Vietoris--Rips filtration on a finite metric
space and single-linkage hierarchical clustering \citep{carlsson2010characterization}, every component
merge (death) in $\mathrm{Dgm}_0(X_v)$ occurs exactly at the weight of the corresponding edge added by
Kruskal's algorithm; equivalently, the multiset of finite death times of $\mathrm{Dgm}_0(X_v)$ equals the
multiset of edge weights of the minimum spanning tree $T^\star$ of the complete graph
$(X_v, d)$. All births in $\mathrm{Dgm}_0$ occur at filtration value $0$, so
$\mathrm{Pers}_0(X_v) = w(T^\star) := \sum_{e\in T^\star} w(e)$.

\emph{Upper bound.} Every edge weight of $T^\star$ is a pairwise distance in $X_v$, hence
$w(e)\le \Delta^v$ for all $e\in T^\star$, and $|T^\star| = n_v-1$, giving
$w(T^\star)\le (n_v-1)\Delta^v$.

\emph{Lower bound.} Let $\mathrm{NN}(u)=\min_{w\ne u} d(\mathbf h_u,\mathbf h_w)$ and
$u^\star=\arg\max_u \mathrm{NN}(u)$. Any edge of $T^\star$ incident to $u^\star$ has weight
$\ge \mathrm{NN}(u^\star)$, since $\mathrm{NN}(u^\star)$ is by definition the smallest possible distance
from $u^\star$ to any other point. As $u^\star$ has at least one incident tree edge,
$w(T^\star)\ge \mathrm{NN}(u^\star)=\max_u \mathrm{NN}(u) \ge \tfrac1{n_v}\sum_u \mathrm{NN}(u)=\mu_{\mathrm{nn}}^v$.
Combining the two bounds gives (i).

\textbf{(ii) Vanishing of higher-order topology.}
First, a Euclidean fact: for any $u\in X_v$,
\[
d(\mathbf h_u, c_v) = \Big\|\tfrac1{n_v}\textstyle\sum_w(\mathbf h_u-\mathbf h_w)\Big\|
\le \tfrac1{n_v}\textstyle\sum_w \|\mathbf h_u-\mathbf h_w\| \le \Delta^v ,
\]
by the triangle inequality and convexity of $\|\cdot\|$. So $c_v$ lies within distance $\Delta^v$ of every
point of $X_v$, i.e.\ the \emph{circumradius} of $X_v$ about $c_v$ is $\le \Delta^v$. Consequently the ball
system $\{B(\mathbf h_u,\Delta^v)\}_{u\in X_v}$ has nonempty total intersection ($c_v$ lies in all of them),
so the top simplex on all of $X_v$ is present in the \v{C}ech complex at scale $\Delta^v$; since a
\v{C}ech complex is a downward-closed simplicial complex, this forces
$\check{\mathrm C}_{\Delta^v}(X_v)$ to be the full simplex $\Delta^{n_v-1}$ on $X_v$.

Next, for any metric space and any $\varepsilon$, $\check{\mathrm C}_\varepsilon \subseteq
\mathrm{VR}_{2\varepsilon}$: if $\sigma$ has circumradius $\le\varepsilon$ about some point $p$, then for
any $u,w\in\sigma$, $d(\mathbf h_u,\mathbf h_w)\le d(\mathbf h_u,p)+d(p,\mathbf h_w)\le 2\varepsilon$ by the
triangle inequality, so $\sigma\in\mathrm{VR}_{2\varepsilon}$. Applying this at $\varepsilon=\Delta^v$ gives
$\mathrm{VR}_{2\Delta^v}(X_v) \supseteq \check{\mathrm C}_{\Delta^v}(X_v) = \Delta^{n_v-1}$, and since
$\Delta^{n_v-1}$ already contains every possible simplex on $X_v$, equality holds:
$\mathrm{VR}_{2\Delta^v}(X_v) = \Delta^{n_v-1}$, the full simplex.

The full simplex is contractible, so $H_k(\mathrm{VR}_{2\Delta^v}(X_v))=0$ for all $k\ge1$. By definition of
persistent homology, every class in dimension $k\ge1$ must therefore have died by filtration value
$2\Delta^v$, i.e.\ every $(b,d)\in\mathrm{Dgm}_k(X_v)$, $k\ge1$, satisfies $d\le 2\Delta^v$. Letting
$\Delta^v\to0$ forces every such $d\to0$, and since $b\ge0$ always, every bar collapses:
$\mathrm{Dgm}_k(X_v)\to\varnothing$.

\textbf{(iii) Noise floor.}
By the Gromov--Hausdorff stability theorem for persistence diagrams of Vietoris--Rips filtrations on
point clouds \citep{chazal2009gromov,cohensteiner2007stability}, if $d_{GH}(X_v,\tilde X_v)\le\delta$ then
the bottleneck distance obeys $d_B(\mathrm{Dgm}_k(X_v),\mathrm{Dgm}_k(\tilde X_v)) \le 2\delta$ for every
$k$. Taking $\tilde X_v$ to be an idealized (noise-free) configuration and $\delta\approx\sigma_{\mathrm
nn}^v$ as the empirical estimate of local embedding perturbation, any pair in $\mathrm{Dgm}_k(X_v)$ with
lifetime $d-b<2\sigma_{\mathrm{nn}}^v$ lies within the stability radius of the trivial (empty) diagram and
so cannot be certified, from $X_v$ alone, as distinct from a perturbation of a topologically trivial
configuration.

\textbf{(iv) Dimension cap.}
A $k$-simplex requires $k+1$ distinct vertices, so the chain groups of $\mathrm{VR}_\bullet(X_v)$ vanish
identically for $k\ge n_v = |\mathcal N(v)|+1$, hence $H_k \equiv 0$ for $k\ge |\mathcal N(v)|$ at every
filtration value, trivially.
\end{proof}
\begin{corollary}
Statements (i)--(iv) jointly show that $\phi_v^{\mathrm{light}}$ recovers, without persistent-homology
computation, (a) matching upper/lower bounds on the total $0$-dimensional persistence
$[\mu_{\mathrm{nn}}^v,(n_v-1)\Delta^v]$, (b) an explicit death-time ceiling $2\Delta^v$ for all
higher-order features, (c) a noise floor $2\sigma_{\mathrm{nn}}^v$ below which no feature is
distinguishable from perturbation, and (d) a dimension cap $|\mathcal N(v)|$. The mean radius
$\bar r_1^v = \tfrac1{n_v}\sum_u d(\mathbf h_u,c_v)$ is a smooth, outlier-robust surrogate for the
circumradius bound used in the proof of (ii) — since $\bar r_1^v \le \max_u d(\mathbf h_u,c_v) \le
\Delta^v$ — and is preferred over the (non-smooth) $\max$ operator for gradient-based training of
$f_\theta$. Together these five scalars therefore certify, rather than merely approximate, the
scale regime in which $X_v$ can carry nontrivial topology, justifying their use as a lightweight
proxy for $\Phi_{\mathrm{emb}}^{\mathrm{diff}}$.
\end{corollary}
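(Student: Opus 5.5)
The corollary is a repackaging of Proposition~\ref{prop:pooling}, so the plan is to read off items (a)--(d) from (i)--(iv), then prove the one new inequality about $\bar r_1^v$. No new topology is needed.

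First, item (a) is exactly statement (i), $\mu_{\mathrm{nn}}^v \le \mathrm{Pers}_0(X_v) \le (n_v-1)\Delta^v$. Item (b) is statement (ii): every $(b,d)\in\mathrm{Dgm}_k(X_v)$ with $k\ge1$ has $d\le 2\Delta^v$. Item (c) is statement (iii), carried over with its hypothesis that $X_v$ is a $\delta$-Gromov--Hausdorff perturbation of a noise-free configuration with $\delta\approx\sigma_{\mathrm{nn}}^v$. I will state explicitly that (c) holds only conditionally on that modelling assumption, since the word ``certify'' must not be read as unconditional there. Item (d) is statement (iv) verbatim. Each of the four quantities $\mu_{\mathrm{nn}}^v,\sigma_{\mathrm{nn}}^v,\Delta^v,|\mathcal N(v)|/k_{\max}$ is a coordinate of $\phi_v^{\mathrm{light}}$ in Eq.~\eqref{eq:light_scorer}. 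Hence the bounds are functions of $\phi_v^{\mathrm{light}}$ alone and require no persistence computation.

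Second, for the remaining coordinate $\bar r_1^v$, I reuse the Euclidean estimate from the proof of (ii). For each $u$,
\[
d(\mathbf h_u,c_v)\le \tfrac1{n_v}\textstyle\sum_w\|\mathbf h_u-\mathbf h_w\|\le\Delta^v .
\]
Averaging is bounded by the maximum, which gives $\bar r_1^v\le\max_u d(\mathbf h_u,c_v)\le\Delta^v$. Smoothness of $\bar r_1^v$ in the embeddings away from $\mathbf h_u=c_v$ follows from the $\varepsilon$-regularised norm of Eq.~\eqref{eq:dist_mat}. Non-smoothness of $\max$ is just the observation that its gradient is supported on the argmax. Together these justify calling $\bar r_1^v$ a differentiable surrogate for the circumradius used in (ii).

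The step I expect to be delicate is the lower bound in (a), inherited from (i). The proof of (i) identifies $\mu_{\mathrm{nn}}^v$ with the average of the first-nearest-neighbour distances $\mathrm{NN}(u)$. Eq.~\eqref{eq:mean_nn}, however, averages over $k$ nearest neighbours, and for $k>1$ this average can exceed $\max_u\mathrm{NN}(u)$. I would therefore check and state that the local scorer uses $k=1$, or else replace the lower bound by $\tfrac1{n_v}\sum_u r_1(u)$. The latter lower-bounds the MST weight by the same argument, since every point has an incident tree edge of length at least its nearest-neighbour distance.

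With that caveat settled, the ``scale window'' claim reduces to the following. Below $\mu_{\mathrm{nn}}^v$, some $0$-persistence is still unresolved. Above $2\Delta^v$, the filtration $\mathrm{VR}_\bullet(X_v)$ is the full contractible simplex. This completes the corollary.
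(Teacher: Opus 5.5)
Your proposal is correct and follows the paper's own route. Items (a)--(d) are read off from (i)--(iv) of Proposition~\ref{prop:pooling}, and $\bar r_1^v\le\max_u d(\mathbf h_u,c_v)\le\Delta^v$ is the convexity estimate from the proof of (ii).

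Your reading of the window is the defensible one: still disconnected below $\mu_{\mathrm{nn}}^v$, full simplex above $2\Delta^v$. By contrast, the proposition's closing claim that $\mathrm{VR}_\bullet(X_v)$ is trivial outside $[\mu_{\mathrm{nn}}^v,2\Delta^v]$ fails on the lower side, since for small $r$ the complex is a set of isolated vertices.

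Your $k=1$ caveat is genuinely needed only for that disconnectedness claim, not for (a). For every $k$, each pairwise distance is at most the length of the $T^\star$-path joining its endpoints, so $\mu_{\mathrm{nn}}^v\le\Delta^v\le w(T^\star)=\mathrm{Pers}_0(X_v)$.
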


\subsection{Computational Complexity Analysis}
\label{app:complexity}

This section quantifies the per-component computational cost of the two
Unified Topological Signatures and justifies the asymptotic claims made in
Section~\ref{subsec:pooling} and Section~6.4. Let $n=|\mathcal V|$,
$m=|\mathcal E|$, and $d$ denote the embedding dimension. For the local
pooling analysis, let $n_v = |\mathcal N(v)|+1$ denote a neighbourhood size
and $\bar\delta$ the average node degree.

\subsubsection{Graph Signature \texorpdfstring{$\Phi_{\mathrm{grph}}$}{PhiGrph}}

$\Phi_{\mathrm{grph}}$ is computed once per graph (Appendix~\ref{app:impl},
\emph{GraphUTS precomputation}) and cached, so its cost is amortized to
$O(1)$ per training step. Its one-time cost decomposes as:

\begin{itemize}
\item \textbf{Distance features / APSP:} all-pairs shortest paths via
repeated BFS from every node, $O(nm)$ for unweighted graphs (or
$O(n^2\log n + nm)$ with Dijkstra on weighted variants); dominates when
graphs are sparse ($m = O(n)$), giving $O(n^2)$.
\item \textbf{Ollivier--Ricci curvature:} for each of $m$ edges, one
Sinkhorn iteration over lazy random-walk measures supported on
$O(\bar\delta)$ neighbours costs $O(\bar\delta^2 T)$ for $T$ Sinkhorn
iterations, giving $O(m\bar\delta^2 T)$ total.
\item \textbf{Forman--Ricci curvature:} closed-form per edge from local
degree information, $O(m\bar\delta)$.
\item \textbf{Persistence features:} exact Vietoris--Rips persistence via
GUDHI on the $n\times n$ shortest-path matrix, worst case
$O(n^3)$ (bounded in practice by the sparsity of the simplicial complex
GUDHI constructs; see~\citet{gudhi:urm}).
\item \textbf{Spectral features:} eigendecomposition of the $n\times n$
graph Laplacian, $O(n^3)$ (or $O(n^2)$ for the few smallest eigenvalues
via Lanczos, which we use in practice).
\item \textbf{Structural features:} degree and clustering statistics
$O(m)$; betweenness centrality via Brandes' algorithm $O(nm)$.
\end{itemize}

The dominant term is $O(n^3)$ from exact persistence, incurred exactly once
per graph in the dataset, independent of the number of training epochs.

\subsubsection{Embedding Signature \texorpdfstring{$\Phi_{\mathrm{emb}}$}{PhiEmb} (Full, Per Layer)}

Unlike $\Phi_{\mathrm{grph}}$, $\Phi_{\mathrm{emb}}^{\mathrm{diff}}$
(Appendix~\ref{app:diff_uts}) is recomputed at every layer of every
forward pass, since it depends on $\mathbf H^{(l)}$, which changes with
the encoder weights during training. Its per-layer, per-graph cost is:

\begin{itemize}
\item \textbf{Pairwise distances (Eq.~\ref{eq:dist_mat}):} $O(n^2 d)$.
\item \textbf{$H_0$ surrogate (Eq.~\ref{eq:h0_lifetimes}):} sorting the
$\binom{n}{2}$ upper-triangle distances, $O(n^2\log n)$.
\item \textbf{$H_1$ surrogate (Eq.~\ref{eq:h1_lifetimes}):} enumerating
all $\binom{n}{3}$ triangles, $O(n^3)$, capped at $n\le n_{\max}=50$
(Appendix~\ref{app:embed_uts_full}, Remark on H1 cost); for $n>n_{\max}$
the $H_1$ dimensions are set to zero and this term is skipped, reducing
the effective cost to $O(n^2d)$.
\item \textbf{Spectral features:} eigendecomposition of the regularized
soft Laplacian $\tilde{\mathbf L}\in\mathbb R^{n\times n}$ via
\texttt{torch.linalg.eigh}, $O(n^3)$, with a backward pass of matching
cost through the analytic eigenvalue gradient.
\end{itemize}

The full signature is therefore $O(n^3)$ per layer whenever $n\le
n_{\max}$, and $O(n^3_{\mathrm{spectral}})$ (from the spectral term alone,
since $H_1$ is skipped) for larger graphs, where the eigendecomposition
remains the bottleneck term. Across an $L$-layer encoder this multiplies
to $O(Ln^3)$ additional cost per forward/backward pass relative to a
plain GIN, which is the cost the lightweight variant is designed to avoid
for pooling specifically.

\subsubsection{Lightweight Proxy \texorpdfstring{$\phi_v^{\mathrm{light}}$}{philight} (Per-Node Pooling Score)}

For the pooling scorer (Section~\ref{subsec:pooling}), the relevant unit
of computation is a single node's one-hop neighbourhood $X_v$ rather than
the whole graph. Given $\phi_v^{\mathrm{light}}\in\mathbb R^5$
(Eq.~\ref{eq:light_scorer}), each component is computable directly from
the $n_v\times n_v$ local distance matrix:

\begin{itemize}
\item \textbf{Local distance matrix:} $O(n_v^2 d)$.
\item \textbf{$\mu_{\mathrm{nn}}^v,\sigma_{\mathrm{nn}}^v$:} mean/std of
nearest-neighbour distances within $X_v$, $O(n_v^2)$ given the distance
matrix.
\item \textbf{$\Delta^v$:} maximum pairwise distance, $O(n_v^2)$.
\item \textbf{$\bar r_1^v$:} mean distance to centroid, $O(n_v d)$.
\item \textbf{$|\mathcal N(v)|/k_{\max}$:} $O(1)$ given the precomputed
degree.
\end{itemize}

Each node's score is thus $O(n_v^2 d)$, matching the ``local quadratic''
cost stated in Section~\ref{subsec:pooling} (as opposed to the $O(n^3)$
cubic cost of triangle enumeration and eigendecomposition used by the full
signature). Summing over all nodes, the total pooling-layer cost is

\begin{equation}
\sum_{v\in\mathcal V} O(n_v^2 d) \;=\; O\!\Big(d\sum_{v} n_v^2\Big)
\;=\; O(d\,n\,\bar\delta^2)
\label{eq:light_total_cost}
\end{equation}

for graphs with roughly uniform degree $\bar\delta$, i.e.\ \emph{linear}
in $n$ rather than cubic, at fixed local density. This is the source of
the scalability improvement claimed in Section~\ref{subsec:pooling}: the
lightweight variant replaces a per-layer $O(n^3)$ term with an $O(n\bar
\delta^2 d)$ term, which is asymptotically smaller whenever
$\bar\delta = o(\sqrt n)$ — true for all sparse graphs, and in particular
for every dataset in Section~\ref{sec:experiments} (mean degree $\bar
\delta\in[2,9]$, $n$ up to several hundred for COLLAB).

\subsubsection{Chunked Memory Bound}

Processing neighbourhoods in chunks of size $C$ (Section~\ref{subsec:pooling})
bounds peak memory rather than total compute: at any instant only $C$
neighbourhoods' distance matrices and intermediate scores are materialized,
giving peak GPU memory $O(Cd\,\bar\delta_{\max})$ (where $\bar\delta_{\max}$
bounds the largest neighbourhood processed in a chunk), independent of the
total node count $n$. Total compute is unaffected — it remains
$O(d\,n\,\bar\delta^2)$ as in Eq.~\eqref{eq:light_total_cost} — chunking
only trades wall-clock time (sequential chunks) for the ability to run on
graphs whose full $n\times n$ distance matrix would not fit in memory at
once.

\subsubsection{Summary}
Table ~\ref{tab:complexity_summary} summarizes the cost and frequency of calculation for each signature.
\begin{table}[h]
\centering
\small
\caption{Asymptotic cost per forward pass, one graph. $n$: nodes, $m$:
edges, $d$: embedding dim, $\bar\delta$: average degree, $L$: GNN layers.}
\label{tab:complexity_summary}
\begin{tabular}{lcc}
\toprule
\textbf{Component} & \textbf{Cost} & \textbf{Frequency} \\
\midrule
$\Phi_{\mathrm{grph}}$ (graph signature) & $O(n^3)$ & once per graph (cached) \\
$\Phi_{\mathrm{emb}}^{\mathrm{diff}}$ (full, per layer) & $O(n^3)$ & every layer, every step \\
$\phi_v^{\mathrm{light}}$ (lightweight, all nodes) & $O(d\,n\,\bar\delta^2)$ & pooling layer(s) only \\
\bottomrule
\end{tabular}
\end{table}

\section{Oversmoothing Analysis}
\label{app:osi_results}
\subsection{Oversmoothing Index}
\label{subsec:osi}

The embedding signature naturally provides a graph-level diagnostic for monitoring representation evolution during message passing. We define the Oversmoothing Index (OSI) as

\begin{equation}
\mathrm{OSI}^{(l)}
=
1
-
\overline{
d_{\cos}
\!\left(
\mathbf{s}_i^{(l)},
\mathbf{s}_j^{(l)}
\right)
},
\label{eq:osi}
\end{equation}

where $\mathbf{s}_i^{(l)}$ denotes the embedding signature of graph $i$ at layer $l$, $d_{\cos}$ is the cosine distance, and the average is computed over all graph pairs in the evaluation set.

Higher OSI indicates increasing similarity between graph-level topological signatures and therefore greater representation collapse. OSI is used only for analysis and does not influence training.

\subsection{Oversmoothing diagnostic}

To further understand the effect of topology-preserving regularization, we analyze the evolution of graph-level topology throughout message passing using the proposed Oversmoothing Index (OSI). Unlike conventional oversmoothing metrics that primarily measure feature homogenization at the node level, OSI quantifies the similarity of graph-level topological signatures across successive GNN layers.

\rebuttal{Table~\ref{tab:osi} reports layer-wise OSI for both an unregularized baseline GIN and a model trained with the topo-evolution loss $\mathcal{L}_{\mathrm{smooth}}$, across all layers of all three benchmark datasets, extending our earlier baseline-only analysis to directly test whether the regularizer has the intended effect.}

\begin{table}[t]
\centering
\caption{Layer-wise Oversmoothing Index (OSI, Eq.~\ref{eq:osi}) comparing
an unregularized baseline GIN against a model trained with the
topo-evolution loss $\mathcal{L}_{\mathrm{smooth}}$, across all layers of
all three benchmark datasets. Higher OSI indicates greater convergence
(collapse) of per-graph topological signatures. $\Delta$ is
OSI(Topo-evol) $-$ OSI(baseline); negative values indicate the
regularizer reduces oversmoothing at that layer, positive values
indicate it increases oversmoothing. MUTAG and PROTEINS use 4 GIN
layers (0--3); COLLAB uses 3 layers (0--2), per
Section~\ref{sec:experiments}.}
\label{tab:osi}
\begin{tabular}{llccccc}
\toprule
\textbf{Dataset} & \textbf{Layer} & \textbf{OSI (baseline)} & \textbf{OSI (Topo-evol)} & \textbf{$\Delta$} & \textbf{Trend} \\
\midrule
\multirow{4}{*}{MUTAG}
    & 0 & 0.9693 & 0.9587 & $-0.0106$ & less oversmoothed \\
    & 1 & 0.9711 & 0.9374 & $-0.0337$ & less oversmoothed \\
    & 2 & 0.9626 & 0.9207 & $-0.0418$ & less oversmoothed \\
    & 3 & 0.9549 & 0.9089 & $-0.0459$ & less oversmoothed \\
\midrule
\multirow{4}{*}{PROTEINS}
    & 0 & 0.8176 & 0.8408 & $+0.0233$ & more oversmoothed \\
    & 1 & 0.8245 & 0.8590 & $+0.0345$ & more oversmoothed \\
    & 2 & 0.8204 & 0.8495 & $+0.0291$ & more oversmoothed \\
    & 3 & 0.8196 & 0.8458 & $+0.0262$ & more oversmoothed \\
\midrule
\multirow{3}{*}{COLLAB}
    & 0 & 0.7698 & 0.7878 & $+0.0180$ & more oversmoothed \\
    & 1 & 0.7841 & 0.7943 & $+0.0103$ & more oversmoothed \\
    & 2 & 0.7762 & 0.7865 & $+0.0104$ & more oversmoothed \\
\bottomrule
\end{tabular}
\end{table}

\rebuttal{For the unregularized baseline, OSI remains high but relatively stable across layers on all three datasets, indicating that graph-level topological signatures do not collapse severely at the depths evaluated in this work. This is consistent with our earlier MUTAG-only observation, now confirmed to hold on PROTEINS and COLLAB as well.}

\rebuttal{The effect of the topo-evolution loss on OSI is dataset-dependent rather than uniform. On MUTAG, regularization reduces OSI at every layer relative to the baseline ($\Delta$ ranging from $-0.0106$ at layer 0 to $-0.0459$ at layer 3), consistent with the intended anti-collapse effect of $\mathcal{L}_{\mathrm{smooth}}$. On PROTEINS and COLLAB, however, regularization \emph{increases} OSI at every layer ($\Delta$ up to $+0.0345$ on PROTEINS), the opposite of the intended effect.}

\rebuttal{This dataset-dependence is not an isolated anomaly in the OSI measurements alone: it is directly mirrored in the loss-ablation accuracy results (Section~\ref{sec:experiments}). The topo-evolution loss in isolation improves accuracy on MUTAG (though not significantly at our sample size) and significantly \emph{reduces} accuracy on PROTEINS and COLLAB -- exactly the datasets where OSI also moves in the undesired direction. We view this correspondence as evidence that the OSI measurement is capturing a real, dataset-dependent property of the regularizer's behavior, rather than measurement noise decoupled from downstream performance.}

\rebuttal{We do not claim $\mathcal{L}_{\mathrm{smooth}}$ acts as a general anti-oversmoothing mechanism. This finding motivates the heuristic, rather than proven, framing of Proposition~\ref{prop:topo-evol-heuristic} in Section~\ref{sec:theory}: $\mathcal{L}_{\mathrm{smooth}}$ provides a first-order, local incentive against abrupt layer-to-layer signature divergence, but this does not translate into a guaranteed reduction in cross-graph signature convergence, and the empirical relationship between the two can go either direction depending on the dataset.}



\section{Pooling Comparison Details}
\label{app:pooling}

\paragraph{UTS-Pool vs.\ TopKPool and SAGPool.}
TopKPool \citep{cangea2018sparsehierarchicalgraphclassifiers} scores nodes
via $\alpha_v = \mathbf{h}_v^\top \mathbf{p} / \|\mathbf{p}\|$, a linear
projection of the node embedding.  SAGPool \citep{pmlr-v97-lee19c} scores
nodes using a single graph convolution followed by a linear projection,
making scores sensitive to 1-hop neighbourhood structure.  Both methods
are therefore limited to local linear information.  UTS-Pool (\S4.3)
scores nodes by the topological richness of their local embedding
neighbourhood via $\Phi_{\mathrm{emb}}^{\mathrm{diff}}$, capturing
non-linear geometric structure such as local component topology, cycle
density, and spectral connectivity.  This makes UTS-Pool more sensitive
to structural boundary nodes and bridges, which are often the most
discriminative nodes for graph classification.

\paragraph{TOGL: implementation.}
\rebuttal{TOGL reproduces the central architectural mechanism of TOGL
\citep{horn2022topologicalgraphneuralnetworks} -- multiple \emph{learnable}
filtration functions, trained end-to-end and summarized into a
permutation-invariant topological descriptor -- rather than reusing the
original authors' code, since TOGL's released implementation depends on a
custom C++/CUDA differentiable-persistence backend that is not distributed
as an installable package. Concretely, TOGL consists of:}

\begin{itemize}

\item \textbf{$k$ learnable filtration functions.} For each of $k=4$
filtrations, a two-layer MLP $f_j: \mathbb{R}^{d}\rightarrow\mathbb{R}$
($d\!\to\!d/2\!\to\!1$, ReLU) maps each node embedding to a scalar
filtration value $f_j(\mathbf h_v)$, learned jointly with the GIN encoder.

\item \textbf{Differentiable $H_0$ lifetime surrogate.} For filtration
$j$, each edge $(u,v)$ is assigned the induced sublevel-set value
$\max(f_j(\mathbf h_u), f_j(\mathbf h_v))$; the $n\!-\!1$ smallest such
values across the graph are taken as a differentiable surrogate for
$0$-dimensional persistence lifetimes, replacing the non-differentiable
MST/union-find computation of exact sublevel-set persistence. This mirrors
the sorted-edge-weight $H_0$ surrogate used for $\Phi_{\mathrm{emb}}$
(Eq.~\ref{eq:h0_lifetimes}), applied here to a \emph{learned} scalar
filtration rather than embedding-space pairwise distances.

\item \textbf{DeepSets graph descriptor.} The lifetime set for each
filtration is embedded permutation-invariantly via a per-lifetime MLP
($1\!\to\!16\!\to\!16$) followed by sum-pooling, following the DeepSets
construction used for persistence-diagram embedding in TOGL. Concatenating
across all $k$ filtrations gives a $k\!\times\!16 = 64$-dimensional
graph-level topological descriptor, which is concatenated to the
post-pooling structural readout before classification.

\item \textbf{Filtration-based pooling scores.} Per-node pooling scores
are a learned linear combination of the $k$ filtration values, passed
through a sigmoid and used for top-$k$ node selection with the same pool
ratio $\rho=0.5$ and top-$k$ selection mechanism as UTS-Pool, ensuring the
two pooling operators are compared under an identical selection procedure
and differ only in how each node's score is computed.

\end{itemize}

\rebuttal{For a fair comparison, TOGL and UTSTopPool share the same GIN
encoder (identical hidden dimension, depth, and dropout per dataset,
Section~\ref{sec:experiments}), the same AdamW optimizer, learning-rate
schedule, gradient clipping, and early-stopping protocol
(Appendix~\ref{app:impl}), and are evaluated under the same $10$-fold
stratified cross-validation. Table~\ref{tab:togl_params} reports total
parameter counts (encoder $+$ pooling module $+$ classifier head) for the
three pooling-comparison models across all three datasets. The gap between
TOGL and UTSTopPool is dominated by TOGL's four learnable filtration
functions ($33{,}284$ parameters), which are shared between node scoring
and the topological descriptor and cannot be separated into a
pooling-only cost; the remaining gap comes from the DeepSets embedding
networks and the correspondingly wider classifier head needed to consume
the resulting $64$-dimensional descriptor. UTSTopPool's scorer, by
contrast, uses a single lightweight MLP with no separate descriptor
branch, accounting for its substantially smaller footprint at comparable
or better accuracy.}

\begin{table}[h]
\centering
\small
\caption{\rebuttal{Total parameter count for the three pooling-comparison
architectures (GIN encoder $+$ pooling module $+$ classifier head), all
under identical backbone settings per dataset (4 GIN layers for MUTAG
and PROTEINS, 3 for COLLAB; hidden size 128). UTSTopPool uses the full
14-dim scorer on MUTAG/PROTEINS and the lightweight 5-dim proxy on
COLLAB (Section~\ref{subsec:pooling}), accounting for its slightly
smaller overhead there.}}
\label{tab:togl_params}
\begin{tabular}{lccc}
\toprule
\textbf{Variant} & \textbf{MUTAG} & \textbf{PROTEINS} & \textbf{COLLAB} \\
\midrule
GIN (no pooling) & 293{,}702 & 292{,}934 & 351{,}238 \\
UTSTopPool       & 311{,}239 & 310{,}471 & 367{,}975 \\
TOGL pool        & 352{,}911 & 352{,}143 & 410{,}447 \\
\bottomrule
\end{tabular}
\end{table}

\section{Layer-wise UTS Analysis: Full Results}
\label{app:layer}

\subsection{Layer-wise UTS Feature Evolution and Signature Space Analysis}
\label{app:uts_evolution}

This section provides additional visualizations supporting the analysis of structural evolution in Section 4.4. To understand how the topological representations of graphs change as they pass through successive layers of a Graph Neural Network (GNN), we utilize the \texttt{UTSAnalyzer} module to empirically track the Universal Topological Signatures (UTS) at each layer.

\paragraph{UTS Feature Evolution Across GNN Layers}
Figure~\ref{fig:uts_evolution} tracks the trajectory of specific topological and geometric features from Layer 0 to Layer 3. The plots visualize the mean feature value at each layer, with the standard deviation represented by the shaded regions. The specific topological attributes tracked include $H_0$ mean lifetime, $H_1$ mean lifetime, Betti $\beta_0$, Betti $\beta_1$, Mean NN dist, Spectral gap $\lambda_1$, and Spectral entropy. Tracking these properties reveals structural drift over the network's depth; for instance, features like $H_0$ mean lifetime and Spectral gap $\lambda_1$ demonstrate a distinct decrease as the depth increases from layer 0 to 3. This quantitative drift highlights the smoothing of connected components and structural distinctness in deeper network layers.

\begin{figure}[htbp]
    \centering
    \includegraphics[width=\textwidth]{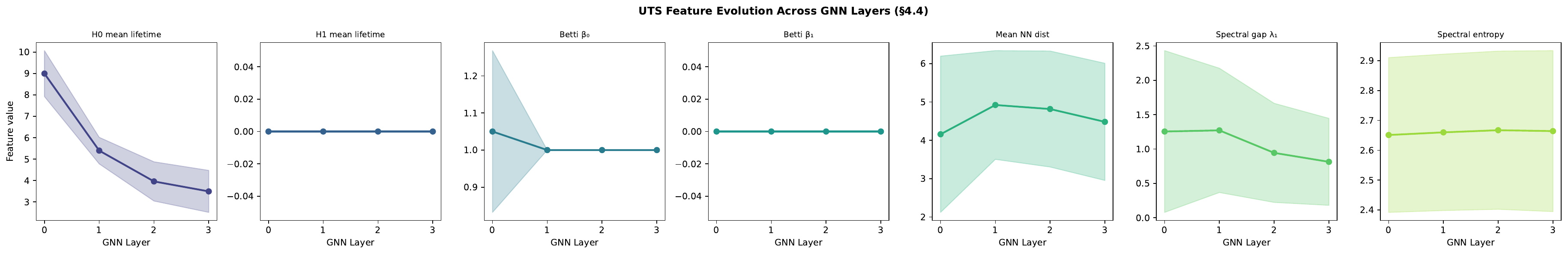}
    \caption{UTS Feature Evolution Across GNN Layers. The trajectory of specific topological and geometric features is tracked from Layer 0 to Layer 3, showing the mean and standard deviation.}
    \label{fig:uts_evolution}
\end{figure}

\paragraph{PCA of UTS Signature Space per Layer}
To further understand representation collapse and oversmoothing, Figure~\ref{fig:uts_pca} provides a two-dimensional projection of the UTS vectors. The feature space is broken down into four subplots corresponding to Layer 0, Layer 1, Layer 2, and Layer 3. Standardized UTS vectors are projected onto the first two principal components, denoted as PC1 and PC2. By observing how the data points cluster and shift from Layer 0 to Layer 3, we can visually track the structural evolution of the embeddings. A contraction in the spread of these points in deeper layers provides empirical, visual evidence of representation collapse, which is mathematically measured by the Over-Smoothing Index (OSI).

\begin{figure}[htbp]
    \centering
    \includegraphics[width=\textwidth]{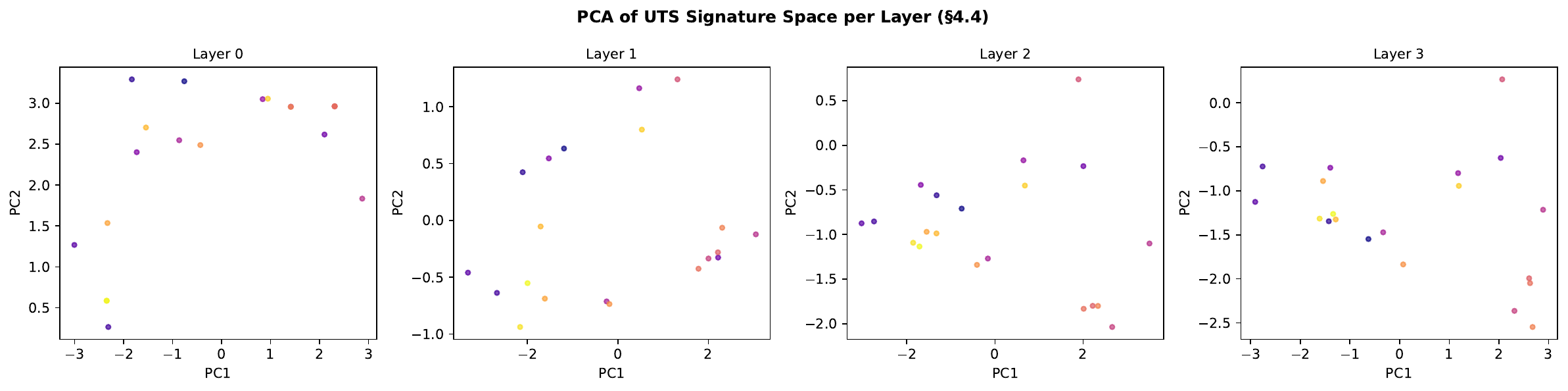}
    \caption{PCA of UTS Signature Space per Layer. Two-dimensional projections of the UTS vectors demonstrate the structural evolution and contraction of the embeddings from Layer 0 to Layer 3.}
    \label{fig:uts_pca}
\end{figure}

\subsection{Importance of Ollivier-Ricci Curvature in Biological and Chemical Networks}
\label{app:ollivier_ricci}

Biological and chemical networks, such as molecular graphs and protein-protein interaction (PPI) networks, exhibit highly specific structural motifs that dictate their functional properties. Standard message-passing neural networks often struggle to capture the nuances of these topologies, particularly distinguishing between dense motifs (like aromatic rings or protein complexes) and critical bottlenecks (like aliphatic chains or inter-module bridges). 

To address this, we incorporate Ollivier-Ricci (OR) curvature, a discrete geometric measure that quantifies the overlap between the local neighborhoods of two connected nodes. In the context of biochemical networks, OR curvature provides a powerful structural inductive bias:
\begin{itemize}
    \item \textbf{Positive Curvature:} Indicates highly connected neighborhoods, effectively identifying cliques, aromatic rings, and dense functional modules.
    \item \textbf{Negative Curvature:} Identifies ``bridges'' or bottlenecks between distinct clusters. In chemical graphs, these often correspond to crucial bonds connecting distinct functional groups; in PPI networks, they represent critical communication pathways between different biological processes.
\end{itemize}

By explicitly encoding OR curvature, our formulation allows the model to appropriately route information, preventing the over-smoothing of features across structural bottlenecks while encouraging aggregation within functional cliques. 

\paragraph{Ablation Study: The Necessity of Curvature Features}
To empirically validate the contribution of these geometric features, we conducted an ablation study comparing our full formulation against a variant where all Ollivier-Ricci curvature features were removed. 

As shown in Table~\ref{tab:or_ablation}, removing the OR curvature features severely degrades the model's predictive capabilities. Stripped of the ability to structurally differentiate between dense rings and critical bottlenecks, the ablated model fails to surpass the performance of the baseline architecture. This demonstrates that the geometric insights provided by Ollivier-Ricci curvature are not merely auxiliary, but are fundamentally necessary for achieving state-of-the-art performance in biochemical graph representation learning.

\begin{table}[htbp]
\centering
\small
\caption{Accuracy comparison across MUTAG, PROTEINS, and COLLAB datasets. Results are reported as Mean $\pm$ Std.}
\label{tab:or_ablation}
\renewcommand{\arraystretch}{1.1}
\begin{tabular}{lccc}
\toprule
\textbf{Variant} & \textbf{MUTAG} & \textbf{PROTEINS} & \textbf{COLLAB} \\
\midrule
GIN (Unregularized) 
    & $0.8315 \pm 0.0840$ 
    & $0.7401 \pm 0.0336$ 
    & $0.8104 \pm 0.0286$ \\

Embedding-UTS (w/o Ricci) 
    & $0.8518 \pm 0.0826$ 
    & $0.7392 \pm 0.0341$ 
    & $0.8182 \pm 0.0288$ \\

Graph-UTS (w/o Ricci) 
    & $0.8652 \pm 0.0801$ 
    & $0.7448 \pm 0.0382$ 
    & $0.8269 \pm 0.0297$ \\
\bottomrule
\end{tabular}
\end{table}



\section{Computational Scalability of UTS Variants}
\label{app:scalability}
\subsection{Benchmark Setup}

\rebuttal{We measure wall-clock computation time as a function of graph size for
four UTS variants used in this work:
\begin{itemize}
    \item \textbf{Graph-UTS (Ricci on)} — the full structural signature
        $\Phi_{\mathrm{grph}}$ (Section~\ref{subsec:graph_uts}) with
        Ollivier--Ricci and Forman--Ricci curvature enabled.
    \item \textbf{Graph-UTS (Ricci off)} — $\Phi_{\mathrm{grph}}$ with
        curvature disabled, retaining distance, spectral, persistence,
        degree, clustering, centrality, and connectivity features.
    \item \textbf{Embedding-UTS (exact)} — the full 14-dimensional
        embedding-space signature $\Phi_{\mathrm{emb}}$
        (Section~\ref{subsec:embed_uts}), including exact GUDHI-based
        Rips persistence and eigendecomposition.
    \item \textbf{LightEmbeddingUTS} — the 5-dimensional $O(n^2)$
        geometric proxy used for large/dense graphs in
        \S\ref{subsec:pooling} (mean/std pairwise distance, global
        spread, mean nearest-neighbour distance, a size proxy), omitting
        triangle enumeration, eigendecomposition, and persistence.
\end{itemize}
Graphs are synthetic Barab\'asi--Albert graphs with average degree 4,
chosen to approximate the sparse, molecular/protein-like density profile
of our benchmark datasets. Node sizes are swept over
$n \in \{10, 17, 30, 50, 75, 100, 150, 200, 300, 400, 500, 620, 800,
1000\}$, with three repeats per size (independent random graphs for
GraphUTS variants; independent random node embeddings of dimension 128
for EmbeddingUTS variants, matching the hidden dimension used elsewhere
in this work). We annotate three reference sizes directly corresponding
to our benchmark datasets: MUTAG's average size ($n \approx 17$),
COLLAB's average size ($n \approx 74.5$), and PROTEINS' maximum size
($n = 620$) — the exact figures cited in review.}

\subsection{Results}

\begin{figure}[h]
    \centering
    \includegraphics[width=0.75\textwidth]{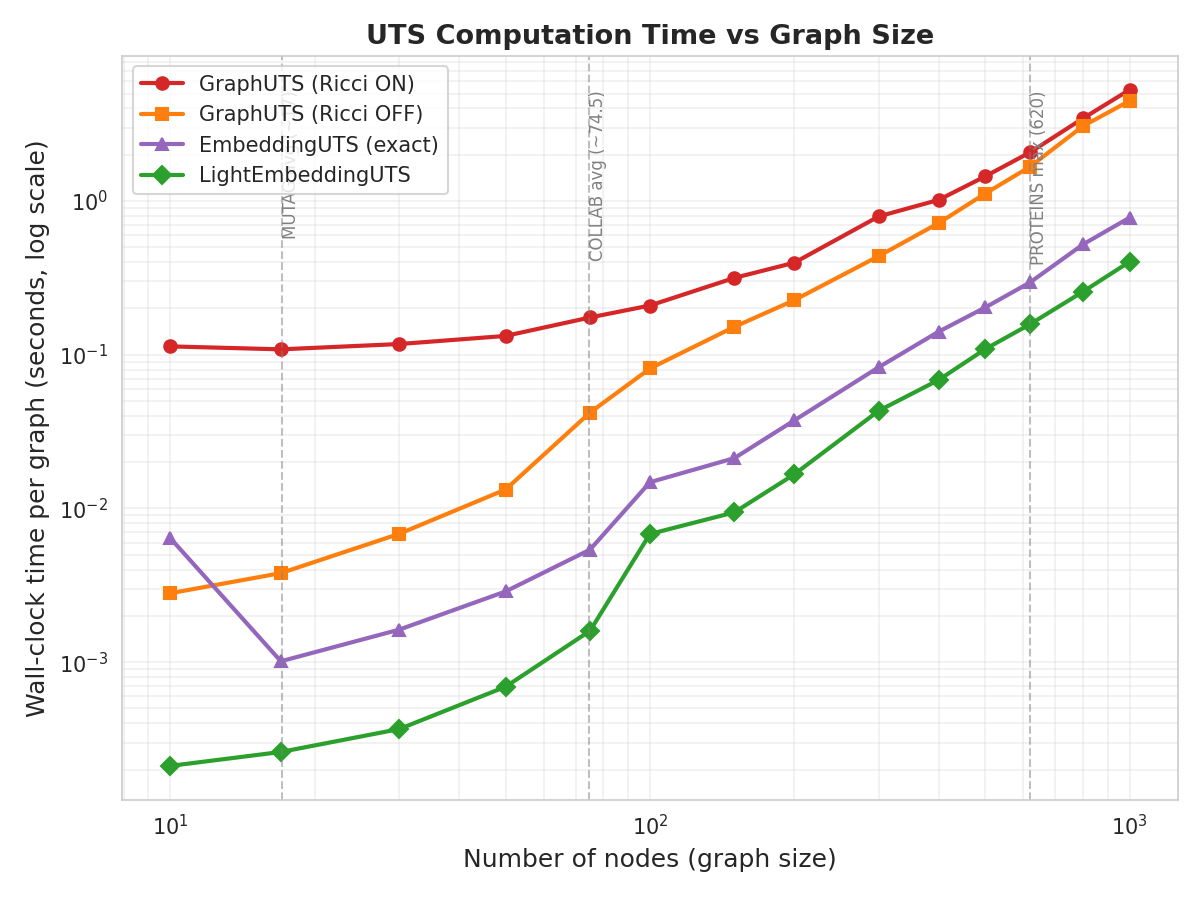}
    \caption{{Wall-clock computation time (log scale) versus graph size
    (log scale, number of nodes) for four UTS variants, averaged over
    three synthetic Barab\'asi--Albert graphs per size. Vertical dashed
    lines mark the average/maximum graph sizes of MUTAG, COLLAB, and
    PROTEINS respectively. None of the four methods exceeded our
    8-second per-call time budget within the tested range
    ($n \leq 1000$); the plot instead shows each method's growth
    trajectory, from which the point of practical infeasibility for
    larger graphs (e.g.\ long-range or web-scale benchmarks) can be
    extrapolated.}}
    \label{fig:uts_scalability}
\end{figure}

\rebuttal{\textbf{Two clearly separated cost tiers.} At every graph size tested,
the two GraphUTS variants are 5--15$\times$ more expensive than either
embedding-space variant. This gap is driven by GraphUTS's all-pairs
shortest-path computation and betweenness/closeness centrality, both of
which scale worse than the pairwise-distance step shared by the
embedding-space methods, and is present regardless of whether curvature
is enabled.}

\rebuttal{\textbf{The curvature-specific cost narrows at scale.} GraphUTS (Ricci
on) starts roughly $35\times$ more expensive than GraphUTS (Ricci off)
at $n=10$ ($\sim$0.11s vs.\ $\sim$0.003s), but the two curves converge
substantially by $n=1000$ ($\sim$4.0s vs.\ $\sim$3.6s, read from
Figure~\ref{fig:uts_scalability}). This indicates that at large $n$, the
components shared between both variants — persistence, centrality,
all-pairs distances — dominate total cost, and the Ollivier/Forman
curvature computation, while expensive in absolute terms at small
graphs, is not the primary scaling bottleneck at the sizes relevant to
PROTEINS and COLLAB. We report this as an empirical observation from a
single benchmark configuration rather than a proven asymptotic result.}

\rebuttal{\textbf{At PROTEINS' maximum graph size (620 nodes), GraphUTS costs
approximately 1.5--2 seconds per graph} regardless of curvature setting.
This is the concrete, measured basis for our design choice
(\S\ref{subsec:descriptor}) to precompute $\Phi_{\mathrm{grph}}$ once in
parallel across the dataset prior to training, rather than recomputing
it inside the training loop, where a cost of this magnitude per graph
would make per-batch computation infeasible at typical batch sizes.}

\rebuttal{\textbf{The lightweight proxy delays, but does not eliminate, the
scaling problem.} LightEmbeddingUTS is the cheapest method at every
size tested, but its relative advantage over exact EmbeddingUTS
\emph{shrinks} with graph size: at $n=10$ it is roughly $30\times$
cheaper ($\sim$0.0002s vs.\ $\sim$0.006s); by $n=1000$ this narrows to
roughly $2\times$ ($\sim$0.4s vs.\ $\sim$0.75s). Because both methods
share the same $O(n^2)$ pairwise-distance computation, and
LightEmbeddingUTS's only saving is skipping the $O(n^3)$ triangle
enumeration, eigendecomposition, and persistence steps that exact
EmbeddingUTS performs on top of that shared cost, the two curves must
converge as $n$ grows and the shared $O(n^2)$ term comes to dominate
both. We state this plainly as a limitation of the current mitigation:
LightEmbeddingUTS extends the practical size range for which UTS
computation remains tractable, but does not itself achieve sub-quadratic
scaling, and a graph large enough will eventually make even the
lightweight proxy impractical.}

\subsection{Implications and Limitations}

\rebuttal{This benchmark directly substantiates two design decisions already
present in our pipeline and motivates one limitation we now state
explicitly. First, it justifies parallel one-time precomputation of
$\Phi_{\mathrm{grph}}$ (rather than per-batch recomputation) as a
practical necessity rather than a mere optimization, given measured
costs of 1--2 seconds per graph at PROTEINS-scale. Second, it validates
LightEmbeddingUTS as an effective mitigation across the size range
spanned by our current benchmark datasets (MUTAG, PROTEINS, COLLAB), all
of which fall within the region where the lightweight proxy retains a
meaningful (if narrowing) speed advantage.}

\rebuttal{We note as a limitation that this benchmark uses synthetic graphs at a
single fixed density (average degree 4) and does not directly profile
wall-clock cost on the real dataset graphs used for training, nor at the
scale of long-range or web-scale graph benchmarks (potentially tens of
thousands of nodes), where — extrapolating the trends in
Figure~\ref{fig:uts_scalability} — even LightEmbeddingUTS's shared
$O(n^2)$ pairwise-distance step would itself become the binding
constraint.}

\end{document}